\definecolor{background-color}{gray}{0.98}
\newcommand{\br}[1]{\left\{#1\right\}}                            
\newcommand*{\QEDW}{\hfill\ensuremath{\square}}%
\newcommand{\norm}[1]{\left\lVert#1\right\rVert}
\newcommand{\of}[1]{\left(#1\right)}
\newcommand{\range}{\mathrm{range}}
\newcommand{\cost}{\mathrm{cost_S}}
\DeclareMathOperator*{\argmax}{arg\,max}
\DeclareMathOperator*{\argmin}{arg\,min}
\newcommand{\REAL}{\ensuremath{\mathbb{R}}}
\newcommand{\abs}[1]        {\left| #1\right|}
\renewcommand{\Pr}{\mathrm{pr}}
\newcommand{\pr}{\mathrm{pr}}
\newcommand{\ceil}[1]{\left \lceil #1 \right \rceil}
\newcommand{\smi}{\sum_{i=1}^n}
\newenvironment{proof}{\noindent\normalfont {\bf Proof}.\ }{\QEDW \par\vskip 4mm\par}
 \newtheorem{theorem}{Theorem}[section]
 \newtheorem{corollary}[theorem]{Corollary}
 \newtheorem{lemma}[theorem]{Lemma}
\newtheorem{definition}[theorem]{Definition}
\newcommand{\Y}{\mathcal{X}}
\newcommand{\Z}{\mathbb{X}}
\newcommand{\ff}{f}
\newcommand{\loss}{\mathrm{loss}}
\newcommand{\eps}{\varepsilon}
\newcommand{\smid}{\sum_{i=1}^{d}}
\newcommand{\co}{8}
\DeclarePairedDelimiter\floor{\lfloor}{\rfloor}
\renewenvironment{abstract}
  {{\bfseries\noindent{\abstractname}\par\nobreak}\footnotesize}
  {\bigskip}
\titlespacing{\section}{0pt}{*3}{*1}
\titlespacing{\subsection}{0pt}{*2}{*0.5}
\titlespacing{\subsubsection}{0pt}{*1.5}{0pt}
\providecommand\citet{\cite}
\providecommand\citep{\cite}
\newif\iflatexml\latexmlfalse
\providecommand{\tightlist}{\setlength{\itemsep}{0pt}\setlength{\parskip}{0pt}}%
\newif\ifER
\begin{document}

\title{Introduction to Coresets: Approximated Mean}

\author{Alaa Maalouf\thanks{Corresponding author: A. Maalouf. E-mail: alaamalouf12@gmail.com.}}
\author{Ibrahim Jubran}
\author{Dan Feldman}
\affil{Robotics \& Big Data Labs,\\
       Computer Science Department,\\
        University of Haifa,\\
        Abba Khoushy Ave 199, Israel.}%

\vspace{-1em}

  \date{}

\begingroup
\let\center\flushleft
\let\endcenter\endflushleft
\maketitle
\endgroup

\selectlanguage{english}
\begin{abstract}
A \emph{strong coreset} for the mean queries of a set $P$ in $\REAL^d$ is a small weighted subset $C\subseteq P$, which provably approximates its sum of squared distances to any center (point) $x\in \REAL^d$. 
A \emph{weak coreset} is (also) a small weighted subset $C$ of $P$, whose mean approximates the mean of $P$. 
While computing the mean of $P$ can be easily computed in linear time, its coreset can be used to solve harder constrained version, and is in the heart of generalizations such as coresets for $k$-means clustering.
In this paper, we survey most of the mean coreset construction techniques, and suggest a unified analysis methodology for providing and explaining classical and modern results including step-by-step proofs. 
In particular, we collected folklore and scattered related results, some of which are not formally stated elsewhere.
Throughout this survey, we present, explain, and prove a set of techniques, reductions, and algorithms very widespread and crucial in this field. However, when put to use in the (relatively simple) mean problem, such techniques are much simpler to grasp.
The survey may help guide new researchers unfamiliar with the field, and introduce them to the very basic foundations of coresets, through a simple, yet fundamental, problem. Experts in this area might appreciate the unified analysis flow, and the comparison table for existing results.
Finally, to encourage and help practitioners and software engineers, we provide full open source code for all presented algorithms.
\end{abstract}%

\sloppy


\section{Introduction} \label{sec:intro}
Suppose we wish to build a hospital in some big city with millions of citizens. The challenge which we first face is to compute an optimal location $x^*$ in the city for the hospital, based on the 2-dimensional home address of the citizens, such that: 
\begin{enumerate*}[(i)]
    \item $x^*$ is simultaneously as close as possible to all the citizens of the city, 
    \item $x^*$ is a valid construction site, in terms of restricted zones or occupied spaces, and
    \item it is close to main routs, and (possibly) many more constraints that may depend on the location of the citizens.
\end{enumerate*} 
If we represent each citizen using a point $p \in \REAL^2$ that describes the citizen's house address on the 2D map of the city, then the problem above reduces to finding a vector $x^* \in \REAL^2$ that minimizes the sum of its squared Euclidean distance (SSD) $\norm{p_i-x^*}^2$ to each of the citizens in $P = \br{p_1,\cdots,p_n}$, under some constraints on $x^*$. Formally, we wish to solve
\[
x^* \in \argmin_{x\in \Z} \sum_{i=1}^n\norm{p_i-x}^2,
\]
where $|P|=n$ is the number of citizens, and $\Z$ represents the feasible $x$ locations (which satisfy our constraints). The left hand side then denotes a query from the set of optimal solutions.

In some cases, it is more important for some of the citizens to be closer to the hospital than others. For example, it is probably more important for citizens with health issues to be nearer to the hospital compared to healthy citizens. In this case, we might introduce a weights vector $w = (w_1,\cdots,w_{n})^T \in [0,\infty)^{n}$, where the $i$th entry $w_i$ denotes the importance of the $i$th citizen being close to the hospital. In this case, we wish to compute
\begin{equation} \label{eq:1mean}
x^* \in \argmin_{x\in \Z} \sum_{i=1}^{n}w_i\norm{p_i-x}^2.
\end{equation}

If there are no constraints, i.e., $\Z=\REAL^d$, the solution to~\eqref{eq:1mean} is simply the weighted mean $x^* = \sum_{i=1}^n w_ip_i$ of $P$. This is the reason the problem is called the ``mean'' problem. 


\paragraph{Generalizations.} If we wish to find the optimal location for a new highway (that is represented by a straight line) passing through the city, which is close as possible to all of the citizens, then this is the \emph{line mean} problem, where the goal is to compute a line (or in general, a $j$-dimensional affine subspace of $\REAL^d$) instead of a point, that minimizes the sum of squared Euclidean distances (SSD) to the citizens (points in $\REAL^d$); see e.g.~\cite{maalouf2019fast}. Those problems, where $j\geq1$, are natural generalizations of the mean problem in~\eqref{eq:1mean} where the dimension of the center is $j=0$, and are closely related to the singular value decomposition~\cite{klema1980singular}, principal component analysis (PCA)~\cite{wold1987principal}, linear regression, and many more problems in machine learning and computational geometry.
To this end, this works, which is the second work in a series of surveys to come, solely handles the mean problem, which we believe is the foundation for all the complex optimization problems mentioned above.


With the increase of input points (data size) and data acquisition models, the computational load for finding the best location for a facility under some constraints, or computing the SVD of the data matrix becomes computationally infeasible in real-time, especially when applied on small IoT devices and huge databases. Beside computing the optimal location, we might need to evaluate, in real-time, the cost of some given potential hospital location.
To mitigate the above problems and handle simultaneously various data models, we suggest to utilize a relatively new paradigm called a \emph{coreset}, which has been gaining popularity in the past decade. This paradigm suggests to provably summarize the input data instead of improving the existing solvers, while allowing to evaluate any query (potential location), on the compressed data.

\paragraph{A coreset }is a modern problem-dependent data summarization scheme that approximates the original (big) dataset $P$ in some provable sense with respect to a (usually infinite) set of questions / queries $\Y$ defined by the problem at hand, and an objective loss/cost function $\ff_\loss$.
If we indeed succeed to provably compress the data, in this sense, then we can then compute the query that minimizes the given objective cost function on the compressed coreset instead of the original data, thus saving time, energy and space without compromising the accuracy by more than a small multiplicative factor.

Coresets are especially useful for learning big data since an off-line and possibly inefficient coreset construction for ``small data" implies constructions that maintains coreset for streaming, dynamic (including deletions) and distributed data in parallel. This is via a simple and easy to implement framework that is often called merge-reduce trees; see~\cite{bentley1980decomposable, indyk2014composable,agarwal2013mergeable}.
The fact that a strong coreset approximates every query (and not just the optimal one for some criterion) implies that we may solve hard optimization problems with non-trivial and non-convex constraints by running a possibly inefficient algorithm such as exhaustive search on the coreset, or running existing heuristics numerous times on the small coreset instead of once on the original data. Similarly, parameter tuning, model selection, or cross validation can be applied on a coreset that is computed once for the original data as explained in~\cite{maalouf2019fast}.

In recent years, coresets were applied to many machine learning algorithms e.g. logistic regression~\cite{huggins2016coresets,munteanu2018coresets,karnin2019discrepancy}, \emph{SVM}~\cite{har2007maximum,tsang2006generalized,tsang2005core,tsang2005very,tukan2020coresets}, clustering problems~\cite{feldman2011scalable,gu2012coreset,jubran2020sets,bachem2018one,lucic2015strong, schmidt2019fair,sohler2018strong}, matrix approximation~\cite{feldman2013turning, maalouf2019fast,feldman2010coresets,sarlos2006improved,maalouf2020faster}, $\ell_z$-regression~\cite{cohen2015lp, dasgupta2009sampling, sohler2011subspace}, and others~\cite{huang2021novel,cohen2021improving,huang2020coresets,mirzasoleiman2020coresets}; see surveys~\cite{feldman2020core,phillips2016coresets,jubran2019introduction}. 

There are many types of coresets and coreset constructions. In this survey we focus on what is sometimes called strong and weak coresets. Informally, a \emph{strong coreset} $C$ guarantees that for \emph{every} query $x\in \Y$, the value of the cost function $\ff_\loss(C,x)$ when applied on the coreset $C$ is approximately the same as $\ff_\loss(P,x)$ when applied on the original full data; see e.g.~\cite{lucic2015strong,feng2019strong}.

A \emph{weak coreset} $C$ (usually) only guarantees that the optimal query for $C$, $x_C^* \in \argmin_{x\in \Y} \ff_\loss(C,x)$ and the optimal query for $P$, $x_P^* \in \argmin_{x\in \Y} \ff_\loss(P,x)$ yield approximately the same cost on the full data, i.e., $\ff_\loss(P,x_P^*) \sim \ff_\loss(P,x_C^*)$; see e.g.~\cite{feldman2007ptas}.

Since a strong coreset $C$ approximates every query $x\in \Y$, we can also minimize $\ff_\loss$ on $C$ given further (previously unknown) constraints $\Z$, since the query set under the assumptions $\Z \cap \Y \subseteq \Y$ is also contained in $\Y$.

We would usually prefer to compute a coreset $C$ which is a (possibly weighted) \emph{subset} $C \subseteq P$ of the input $P$. Such a \emph{subset coreset} has multiple advantages over a non-subset coreset, which are 
\begin{enumerate*}[(i)]
    \item preserved sparsity of the input,
    \item interpretable, 
    \item may be used (heuristically) for other problems, and
    \item less numerical issues that occur when non-exact linear combination of points are used. 
\end{enumerate*}
Unfortunately, not all problems admit such a subset coreset. For further discussion and examples see e.g.~\cite{maalouf2020tight,feldman2020core}.

\paragraph{Why coreset for the mean problem?}
While the mean problem in~\eqref{eq:1mean} is a relatively simple problem, it lies at the basis of more involved and very common problems in machine learning, e.g., the classic $k$-means clustering. In particular, we can always improve a given $k$-clustering, by replacing the center of each cluster by its mean (if this is not already the case). This is indeed the idea behind the classic Lloyd's heuristic~\cite{lloyd1982least} and also behind some coresets for $k$-means~\cite{barger2020deterministic}. 
Most coreset construction algorithms for those hard problems usually borrow or generalize tricks and techniques used in coreset constructions for the (simpler) mean problem.
Furthermore, other works, which seem unrelated at first glance, require at their foundations an algorithm for computing a mean coreset. 
Such problems include coresets for Kernel Density Estimates (KDE) of Euclidean kernels~\cite{phillips2020near}, least squares problems, e.g., coresets for linear regression and the singular value decomposition~\cite{maalouf2019fast}, and coresets for signals~\cite{rosman2014coresets}. 
For example in~\cite{maalouf2019fast} it was shown that in order to compute a lossless SVD (or linear regression) coreset for an $n\times d$ matrix $A$, it is sufficient to compute a smaller $m\times d$ matrix $C$ such that $A^TA=C^TC$. 

The scatter matrix $A^TA$ of an input matrix $A = (a_1 \mid \cdots \mid a_n)^T \in \REAL^{n\times d}$ is the sum $A^TA = \sum_{i=1}^n a_ia_i^T$ over $n$ $d\times d$ matrices. Each such matrix can be "flatten" to a vector in $\REAL^{d^2}$. Hence, we can compute a smaller subset of these $d^2$-dimensional vectors, which accurately estimate their original sum. We thus obtain a weighted subset of the rows of $A$ whose scatter matrix is the desired $A^TA$, with no additional error~\cite{jubran2019introduction}. 
A coreset that introduces multiplicative $1+\eps$ error for this problem (SVD/linear regression) was suggested in~\cite{feldman2016dimensionality}, also here the authors suggested a reduction to the problem of computing a mean coreset with multiplicative $1+\eps$ error for a set of point in a higher dimensional space.
Another example is in the context of $k$-means, where~\cite{barger2020deterministic} showed that in order to compute a $k$-means coreset for a set of points $P$ it is suffices to cluster these points to a large number of clusters, and compute a mean coreset for each cluster, then take the union of these coresets to a single unite set, which is proven to be a $k$-means coreset for $P$.

Not only are the mean-related results scattered across numerous papers and books dating from the last century and till today, but some of those constructions and proofs are not formally stated elsewhere, and can only be inferred by combining many different results. 

\paragraph{Main goal. }To this end, in this work we aim to review the wide range of techniques and methodologies behind the constructions of mean coresets, ranging from loss-less to lossy, from deterministic to randomized, and from greedy to non-greedy constructions. Examples include accurate coresets via computational geometry, random sampling-based coresets via Bernstein inequality, and greedy deterministic coresets via the Frank-Wolfe algorithm~\cite{clarkson2010coresets}.
We provide in-depth proofs, under a unified notation, for all the suggested approaches, and guide the reader through them.
We also analyze and compare all the presented results based on their construction time, size, and the properties discussed above; see Table~\ref{table:ourContrib}.
Both to help readers outside the theoretical computer science community, and to encourage the usage and generalization of the presented algorithms, we provide full open source code for all the presented results~\cite{opencode}.

\begin{table}[ht]
\centering
\begin{adjustbox}{width=\textwidth}
\small
\begin{tabular}{ | c | c | c | c | c | c | c | c |}
\hline
\textbf{\makecell{Coreset\\type}} & \textbf{\makecell{Input\\weights}} & \textbf{\makecell{Probability\\of failure}} & \textbf{\makecell{Multiplicative\\error}} & \textbf{Coreset size $|c|$} & \textbf{Properties} & \textbf{\makecell{Construction\\time}} &  \textbf{\makecell{Formal\\statement}}\\
\hline
Strong & \makecell{$w\in(0,\infty)^n$} & $\delta = 0$ & $\eps = 0$ & $O(1)$ & \makecell{Not a subset,\\requires a different\\cost function} & $O(nd)$ & Section~\ref{sec:Accurate1mean}\\
\hline
Strong & \makecell{$w\in(0,\infty)^n$} & $\delta = 0$ & $\eps = 0$ & $d+2$ & \makecell{Subset\\$u\in\REAL^n$\\$\smi u_i = \smi w_i$} & $O(nd^2)$ & Section~\ref{sec:Accurate1mean}\\
\hline
Strong & \makecell{$w\in(0,\infty)^n$} & $\delta = 0$ & $\eps = 0$ & $d+3$ & \makecell{Subset\\$u\in[0,\sum_{p\in P}w(p)]^n$\\$\smi u_i = \smi w_i$} & $O(nd+d^4\log{n})$ & Section~\ref{sec:Accurate1mean}\\
\hline
\hline
Strong & $w \equiv \frac{1}{n}$ & $\delta \in (0,1)$ & $\eps \in (0,1)$ & $O\left(\frac{d+\log(1/\delta)}{\eps^2}\right)$ & \makecell{Subset\\$u\in\REAL^n$} & $O(nd)$ & Lemma~\ref{tightSens}\\
\hline 
Weak & $w \equiv \frac{1}{n}$ & $\delta \in (0,1)$ & $\eps \in (0,1)$ & $O\left(\frac{d+\log(1/\delta)}{\eps}\right)$ &  \makecell{Subset\\$u\in\REAL^n$}  & $O(nd)$ & Lemma~\ref{tightSensWeak}\\
\hline
Strong & $w\in(0,\infty)^n$ & $\delta \in (0,1)$ & $\eps \in (0,1)$ & $O\left(\frac{\log(d/\delta)}{\eps^2}\right)$ & \makecell{Subset\\$u\in\REAL^n$} & $O(nd)$ & Theorem~\ref{strong-coreset-theorem-bern}\\
\hline
Weak & $w\in(0,\infty)^n$ & $\delta \in (0,1)$ & $\eps \in (0,1)$ & $O\left(\frac{\log(d/\delta)}{\eps}\right)$ & \makecell{Subset\\$u\in\REAL^n$} & $O(nd)$ & Theorem~\ref{weak-coreset-theorem-bern}\\
\hline
\hline
Strong & $w\in(0,\infty)^n$ & $\delta =0$ & $\eps \in (0,1)$ & $O\left(\frac{1}{\eps^2}\right)$ & \makecell{Subset\\$u\in[0,(1+\eps)\smi w_i]^n$\\ $\abs{\smi w_i -\smi u_i}\leq \eps \smi w_i $} & $O\left(\frac{nd}{\eps^2}\right)$ & Theorem~\ref{strong-coreset-theorem}\\
\hline
Weak & $w\in(0,\infty)^n$ & $\delta =0$ & $\eps \in (0,1)$ & $O\left(\frac{1}{\eps}\right)$ & \makecell{Subset\\$u\in[0,(1+\sqrt{\eps})\smi w_i]^n$\\ $\abs{\smi w_i -\smi u_i}\leq \sqrt{\eps} \smi w_i $} & $O\left(\frac{nd}{\eps}\right)$ & Theorem~\ref{weak-coreset-theorem}\\
\hline
\hline
Weak & $w \equiv \frac{1}{n}$ & $\delta \in (0,1)$ & $\eps \in (0,1)$ & $O\left(\frac{1}{\eps \delta}\right)$ &  \makecell{Subset\\$u\in[0,1]^n$\\$\smi u_i = 1$}  & $O(\frac{1}{\eps \delta})$ & Lemma~\ref{weak_coreset_markov}\\
\hline 
Weak & $w \equiv \frac{1}{n}$ & $\delta \in (0,1)$ & $\eps \in (0,1)$ & $O\left(\frac{1}{\eps}\right)$ &  \makecell{Subset\\$u\in[0,1]^n$\\$\smi u_i = 1$}  & $O\left(d\cdot\left(\log^2(\frac{1}{\delta}) +\frac{\log(\frac{1}{\delta})}{\eps}\right)\right)$ & Lemma~\ref{weak-probabilistic-coreset-theorem}\\
\hline
\end{tabular}
\end{adjustbox}
\caption{\textbf{Summary of mean coresets.} This table presents various coresets for the mean problem, both $\eps$-coresets from this work and accurate coresets from~\cite{jubran2019introduction}. The input for all algorithms is a set $P = \br{p_1,\cdots,p_n} \subseteq \REAL^d$ and a weights vector $w \in \REAL^n$. See Section~\ref{sec:intro} and Definitions~\ref{def:strongeCoreset} and~\ref{def:weakCoreset} for the different coreset types. $\delta$ represents the probability of failure of the corresponding algorithm, i.e., a deterministic algorithm has $\delta=0$. The measured error $\eps$ is a multiplicative error, i.e., $\eps = \displaystyle\argmax_{x\in \REAL^d}\frac{\ff_\loss((P,w),x) - \ff_\loss((P,u),x)}{\ff_\loss((P,w),x)}$. 
}
\label{table:ourContrib}
\end{table}

\section{Paper Overview}
This survey is part of a series of surveys that aim to give introduction to coresets; see~\cite{feldman2019core} and~\cite{jubran2019introduction}. 
This work is organized as follows. We first introduce the notations and definitions in Section~\ref{sec:notations}.
In Section~\ref{sec:Accurate1mean}, we briefly summarize a first type of mean coresets constructions. Those coresets are often called \emph{accurate coresets}, as they do not introduce any error when compressing the data, unlike most of the other coresets when such an approximation error is unavoidable for obtaining a small ($o(n)$) coreset.
In Section~\ref{sec:probred} we present a reduction between the problem of computing a (strong and weak) mean coreset for an arbitrary set of input point $Q \subset \REAL^d$ to the problem of computing a mean coreset to a corresponding, yet much simpler, set of points $P$ which we call a ``normalized weighted set''. This set satisfies a set of properties (e.g., zero mean) that will simplify the analysis later on; see Observation~\ref{to_0_mean} and Corollary~\ref{cor:reduction}. 

In Sections~\ref{sec:strong-coreset-red} and~\ref{sec:weak-coreset-red}, we continue and simplify the definition of coreset for a normalized weighted set $P$ by explaining what (sufficient) properties should hold for a set $C$ in order to be a strong/weak coreset for $P$. 
Through Section~\ref{sec:strondelta}, we show how to compute, with a high probability, a (strong and weak) coreset for such a normalized set $P$ based on two different approaches: 
\begin{enumerate*}[(i)]
\item in Subsection~\ref{sec:sensitivityCoresets} we present a random coreset construction which utilizes the well known sensitivity sampling framework~\cite{DBLP:journals/corr/BravermanFL16,feldman2011unified}, 
\item then in Section~\ref{sec:bernstein} we show how to utilize the Bernstein inequality to obtain smaller coresets in the same running time. 
\end{enumerate*}
The two approaches above are very similar, and basically differ in their analysis.
We then present, in Section~\ref{sec:detcore}, a deterministic coreset construction algorithm (zero probability of failure) for an input normalized weighted set; see Theorem~\ref{weak-coreset-theorem} and Theorem~\ref{strong-coreset-theorem}. The main technique in this section is to normalize the data in a way that enables the use of the classic Frank-Wolfe algorithm~\cite{frank1956algorithm} from 1956 (that was re-discovered only recently by~\cite{clarkson2010coresets}). 
Finally, in Section~\ref{sec:weaksublinear}, we present two algorithms for computing, with high probability, a weak coreset in time that is sublinear in the input size. Table~\ref{table:ourContrib} summarizes all the results that are written in this paper.

\section{Notations and assumptions}\label{sec:notations}

In this section we first we first introduce our notations that will be used through the paper, and then give our main definitions.

\begin{paragraph}{Notations.} For a pair of integers $d,n\geq 1$, we denote by $\REAL^{n\times d}$ the union over every $n\times d$ real matrix and $[n] = \br{1,\cdots, n}$. The $\ell_2$, $\ell_1$ and $\ell_0$ norm of a vector $v=(v_1,\cdots,v_d)\in\REAL^d$ are denoted, respectively, by $\norm{v}=\sqrt{\smid v_i^2}$, $\norm{v}_1 =\smid \abs{v_i}$, and $\norm{v}_0$, where and $\norm{v}_0$ is the number of non-zero entries in $v$.

For a matrix $A\in\REAL^{n\times d}$ the \emph{Frobenius norm} $\norm{A}_F$ is the squared root of its sum of squared entries, and $tr(A)$ denotes its trace. A vector $w\in [0,1]^n$ is called a \emph{distribution vector} if its entries sum up to one. For an event $B$ we use $\Pr(B)$ as the probability that event $B$ occurs.


A \textit{weighted set} is a pair $P'=(P,w)$ where $P=\br{p_1,\cdots,p_n} \subseteq \REAL^d$ is a set of $n$ \emph{points}, and $w=(w_1,\cdots,w_n)^T \in (0,\infty)^n$ is called a \emph{weights vector} that assigns every $p_i\in P$ a weight $w_i \in \REAL$. The size of $P'$ is $|P| = n$ and the cardinality of $P'$ is the number of non zero entries $\norm{w}_0$ of $w$. Finally the \emph{weighted sum} of a weighted set $(P,w)$ is defined as $\smi w_ip_i$, and its weighted mean is $\smi \frac{w_i}{\norm{w}_1}p_i$.

\end{paragraph}
\begin{definition} [Normalized weighted set]\label{defNormalize}
A \emph{normalized weighted set} is a weighted set $(P,w)=(\br{p_1,\cdots,p_n},(w_1,\cdots,w_n)^T)$ that satisfies the following three properties:

\begin{enumerate}[(a)]
\item Weights sum to one: $\smi w_i = 1$, \label{a}
\item The weighted sum is the origin: $\smi w_ip_i=\mathbf{0}_d$, and\label{b}
\item The weighted sum of squared norms is $1$: $\smi w_i\norm{p_i}^2=1$. \label{c}
\end{enumerate}
\end{definition}



In what follows is the definition of a strong $\eps$-coreset for the mean problem. A coreset for a weighted set $(P,w)$ is nothing but a re-weighting of the points in $P$ by a new weights vector $u$, such that \emph{every} query $x\in \REAL^d$ will yield approximately the same cost when applied to either $(P,w)$ or $(P,u)$. We usually aim to compute a weighted set $(P,u)$ of cardinality $\norm{u}_0 \ll \norm{w}_0$.
\begin{definition} [Strong mean $(\eps,\delta)$-coreset] \label{def:strongeCoreset}
Let $(P,u)$ and $(P,w)$ be two weighted sets in $\REAL^d$ such that $|P| = n$, and let $\eps, \delta \in [0,1)$. 
We say that $(P,u)$ is a \emph{strong mean $(\eps,\delta)$-coreset} for $(P,w)$ of cardinality $\norm{u}_0$ if, with probability at least $1-\delta$, for every $x\in \REAL^d$,
\[
 \left|\smi w_i\norm{p_i-x}^2 - \smi u_i\norm{p_i-x}^2 \right| \leq \eps \smi w_i\norm{p_i-x}^2.
\]
If $\eps = 0$, we say that $(P,u)$ is a strong mean \emph{accurate coreset} for $(P,w)$, and if $\delta = 0$ we say that the coreset is deterministic and simply call it a strong $\eps$-coreset.
\end{definition}

A weak mean $\eps$-coreset for $(P,w)$ is a weighted set $(P,u)$ such that solving for the optimal query $x\in\REAL^d$ on the coreset $(P,u)$ and applying it on $(P,w)$ yields approximately the same result as if computing the optimal solution of the original set $(P,w)$.
\begin{definition} [Weak mean $(\eps,\delta)$-coreset] \label{def:weakCoreset}
Let $(P,u)$ and $(P,w)$ be a pair of weighted sets, and let $n=|P|$. Let $\bar{p} \in \argmin_{x\in \REAL^d}\sum_{i=1}^{n}w_i\norm{p_i-x}^2$, $\bar{s} \in \argmin_{x\in \REAL^d} \sum_{i=1}^{n}u_i\norm{p_i-x}^2,$ and put $\eps, \delta \in [0,1)$.
Then $(P,u)$ is a \emph{weak mean $(\eps,\delta)$-coreset} (or weak $(\eps,\delta)$-coreset in short) for $(P,w)$ of cardinality $\norm{u}_0$ if with probability at least $1-\delta$, we have:
\[
 \left| \smi w_i\norm{p_i-\bar{p}}^2 - \smi w_i\norm{p_i-\bar{s}}^2\right| \leq \eps \cdot \smi w_i\norm{p_i-\bar{p}}^2.
\]
If $\eps = 0$, we say that $(P,u)$ is a weak mean \emph{accurate coreset} for $(P,w)$, and if $\delta = 0$ we say that the coreset is deterministic and simply call it a weak $\eps$-coreset.
\end{definition}

\section{Accurate mean coresets} \label{sec:Accurate1mean}
Before going into the more involved coresets for the mean problem, in this section we will briefly summarize the most simple coresets which are the accurate coresets; see Definition~\ref{def:strongeCoreset}. Those coresets do not introduce any error when compressing the data, i.e., $\eps = 0$.
The coresets presented in this section are explained in detail in~\cite{jubran2019introduction}.

Let $(P,w)$ be a weighted set (input set) of size $|P| = n$ and $x\in \REAL^d$ be a vector (query).
\paragraph{Simple statistics. }We first make the following simple observation:
\begin{equation} \label{eqStatistics}
\smi w_i\norm{p_i-x}^2=\smi w_i\norm{p_i}^2 -2x^T\smi w_ip_i + \norm{x}^2\smi w_i.
\end{equation}
By this observation, we notice that $\smi w_i\norm{p_i-x}^2$ is equal to the sum of the following $3$ terms: (i) $\smi w_i\norm{p_i}^2$, (ii) $-2x^T\smi w_ip_i$, and (iii) $\norm{x}^2\smi w_i$. Notice that the first term is independent of the query $x$, the second depends on $\norm{x}^2$ and $\smi w_i$, the third term depends on $x^T$ and $\smi w_ip_i$.
Therefore, by pre-computing in $O(n)$ time and storing in memory the following statistics: $\smi w_i\norm{p_i}^2$, $\smi w_ip_i$, and $\smi w_i$, for any (new) given query $x$ we can evaluate $\smi w_i\norm{p_i-x}^2$ in $O(1)$ time by simply evaluating the $3$ terms from~\eqref{eqStatistics} using $x$ and the stored statistics.

We note that this ``coreset'' is different than other coresets presented in this paper, since it is not a subseteq of the input and requires evaluating a different cost function on the coreset than on the original data.

\paragraph{Subset coreset. } We now aim to compute a mean coreset $(P,u)$ of cardinality $\norm{u}_0 << n$. From~\eqref{eqStatistics}, we know that if a weighted set $(P,u)$ satisfies: (i) $\smi w_i\norm{p_i}^2 = \smi u_i\norm{p_i}^2$, (ii) $\smi w_ip_i = \smi u_ip_i$, and (iii) $\smi w_i = \smi u_i$, then clearly $ \smi u_i \norm{p_i-x}_2^2 = \smi w_i \norm{p_i-x}_2^2$ for every $x\in \REAL^d$. Therefore, to compute an accurate strong mean coreset $(P,u)$ for $(P,w)$, we simply need to ensure that (i)--(iii) holds.

It turns out that we can compute in $O(nd^2)$ time a coreset $(P,u)$ of cardinality $\norm{u}_0 \leq d+2$ where $u \in \REAL^{n}$ that satisfies the conditions above. Furthermore, if the input weights $w$ are non-negative, i.e., $w\in [0,\infty)^n$, we can compute in $O(nd + d^4\log{n})$ a coreset $(P,u)$ of cardinality $\norm{u}_0 \leq d+3$ where $u \in [0,\sum_{p\in P}w(p)]^{d+3}$ is both non-negative and bounded; see full details in~\cite{jubran2019introduction}.

\section{Problem Reduction for $\eps$-Coresets}\label{sec:probred}
In this section, we argue that in order to compute a strong (weak) $\eps$-coreset for an input weighted set $(Q,m)$, it suffices to compute a strong (weak) $\eps$-coreset for its corresponding normalized (and much simpler) weighted set $(P,w)$ as in Definition~\ref{defNormalize}; see Corollary~\ref{cor:reduction}.

Note that we do not actually normalize the given input data. The normalization is used only in the analysis and coresets proof of correctness.
\subsection{Reduction To Normalized Weighted Set}

\observation \label{to_0_mean} Let $Q=\{q_1,\cdots,q_n\}$ be a set of $n\geq2$ points in $\REAL^d$, $m\in(0,\infty)^n$, $w \in(0,1]^n$ be a distribution vector such that $w=\frac{m}{\norm{m}_1}$, 
 ${\mu=\smi{w_i q_i}}$ and $\sigma=\sqrt{\smi w_i\norm{q_i - \mu}^2 }$.
Let $P=\br{p_1,\cdots,p_n}$ be a set of $n$ points in $\REAL^d$, such that for every $j\in [n]$ we have $p_j=\frac{q_j - \mu}{\sigma}$. Then, $(P,w)$ is the corresponding normalized weighted set of $(Q,m)$, i.e., (i)-(iii) hold as follows:\label{obs1}
\begin{enumerate}[(i)]
\item $\smi w_i = 1$, \label{sumw_it}
\item $\smi w_ip_i=\mathbf{0}$, and \label{first_it}
\item $\smi w_i\norm{p_i}^2=1$. \label{sec_it}
\end{enumerate}
\begin{proof}
\begin{align*}
&{\textit{~\ref{sumw_it} }} \smi w_i = 1\text{ immediately holds by the definition of $w$.}\\
&{\textit{~\ref{first_it} }}
\sum_{i=1}^n w_ip_i =  \sum_{i=1}^n w_i \cdot \frac{q_i - \mu} {\sigma} = \frac{1}{\sigma} \left( \sum_{i=1}^n w_iq_i -  \sum_{i=1}^nw_i\mu \right) = \frac{1}{\sigma} \left( \mu -  \sum_{i=1}^nw_i \mu   \right) = \frac{1}{\sigma}  \mu \left(1  -  \sum_{i=1}^nw_i \right) = 0,
\intertext{where the first equality holds by the definition of $p_i$, the third holds by the definition of $\mu$, and the last is since $w$ is a distribution vector.}
&{\textit{~\ref{sec_it} }}
\smi w_i\norm{p_i}^2 = \smi w_i\norm{ \frac{q_i - \mu}{\sigma} }^2=\frac{1}{\sigma^2}\smi w_i\norm{{q_i - \mu}}^2 =\frac{\smi w_i\norm{{q_i - \mu}}^2}{\smi w_i\norm{q_i - \mu}^2} =1,
\end{align*}
where the first and third equality hold by the definition of $p_i$ and $\sigma$, respectively.
\end{proof}

\begin{corollary} \label{cor:reduction}
Let $(Q,m)$ be a weighted set, and let $(P,w)$ be its corresponding normalized weighted set as computed in Observation~\ref{to_0_mean}. 
Let $(P,u)$ be a strong (weak) $\eps$-coreset for $(P,w)$ and let $u' = \norm{m}_1 \cdot u$. Then $(Q,u')$ is a strong (weak) $\eps$-coreset for $(Q,m)$.
\end{corollary}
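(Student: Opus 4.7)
The plan is to reduce any query $x\in\REAL^d$ on $(Q,m)$ to a corresponding query $y\in\REAL^d$ on $(P,w)$ via the affine change of variables induced by the normalization, invoke the coreset guarantee for $(P,u)$ at $y$, and scale back. Concretely, given $x\in\REAL^d$, define $y=(x-\mu)/\sigma$. Since $p_i=(q_i-\mu)/\sigma$, we get the identity
\[
\norm{p_i-y}^2=\frac{\norm{q_i-x}^2}{\sigma^2}\qquad\text{for every }i\in[n],
\]
which is the single computation doing all the work in the proof.

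For the strong case, I would multiply the above identity by $w_i=m_i/\|m\|_1$ and by $u_i=u'_i/\|m\|_1$ and sum, yielding
\[
\smi w_i\norm{p_i-y}^2=\frac{1}{\sigma^2\|m\|_1}\smi m_i\norm{q_i-x}^2,\qquad \smi u_i\norm{p_i-y}^2=\frac{1}{\sigma^2\|m\|_1}\smi u'_i\norm{q_i-x}^2.
\]
Plugging these two expressions into the strong $\eps$-coreset guarantee for $(P,u)$ at the query $y$ and clearing the common positive factor $\sigma^2\|m\|_1$ gives exactly the required inequality
\[
\Big|\smi m_i\norm{q_i-x}^2-\smi u'_i\norm{q_i-x}^2\Big|\leq \eps\smi m_i\norm{q_i-x}^2.
\]
Since $x$ was arbitrary (and $y$ ranges over all of $\REAL^d$ as $x$ does), this covers every query. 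The probabilistic case requires no extra work: the bijection $x\leftrightarrow y$ does not touch the random choices used in constructing $u$, so the event on which the coreset guarantee holds is the same.

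For the weak case, the same change of variables shows that the optima correspond: if $\bar p\in\argmin_y\sum w_i\|p_i-y\|^2$ and $\bar s\in\argmin_y\sum u_i\|p_i-y\|^2$ are the normalized minimizers, then $\bar q:=\sigma\bar p+\mu$ and $\bar r:=\sigma\bar s+\mu$ minimize the corresponding objectives for $(Q,m)$ and $(Q,u')$, respectively, because the objectives differ only by the positive multiplicative constant $1/(\sigma^2\|m\|_1)$. Applying the identity above once at $y=\bar p$ and once at $y=\bar s$, and substituting into the weak coreset guarantee for $(P,u)$, again yields the weak $\eps$-coreset inequality for $(Q,m)$ after multiplying through by $\sigma^2\|m\|_1$.

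The only thing to be careful about is the edge case $\sigma=0$, which happens exactly when all points in $Q$ coincide; in that case both sides of the coreset inequalities vanish identically for every query, so the statement is trivial and the reduction need not be invoked. Apart from this, no real obstacle appears: the entire argument is driven by the affine scaling identity $\norm{p_i-y}^2=\norm{q_i-x}^2/\sigma^2$ together with $u'_i=\|m\|_1 u_i$ and $m_i=\|m\|_1 w_i$.
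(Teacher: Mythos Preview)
Your proof is correct and follows essentially the same approach as the paper: both hinge on the affine change of variables $y=(x-\mu)/\sigma$ and the resulting identity $\norm{q_i-x}^2=\sigma^2\norm{p_i-y}^2$, then scale by $\sigma^2\norm{m}_1$ to transfer the coreset guarantee from $(P,w)$ to $(Q,m)$. Your treatment of the weak case via the correspondence of minimizers is slightly cleaner than the paper's explicit expansion of $\smi m_i\norm{q_i-\smi u_iq_i/\norm{u}_1}^2$, but it is the same argument in substance; your handling of the degenerate case $\sigma=0$ is an addition the paper omits.
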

\begin{proof}
Put $x \in \REAL^d$ and let $y=\frac{x - \mu}{\sigma}$. Now, for every $j\in [n]$, we have that
\begin{align}\label{keyobs}
\norm{q_j - x}^2 
=\norm{ \sigma p_j + \mu- (\sigma y + \mu) }^2
= \norm{ \sigma p_j  - \sigma y }^2
= \sigma^2 ||p_j-y||^2,
\end{align}
{where the first equality is by the definition of $y$ and $p_j$.}

We prove Corollary~\ref{cor:reduction} first for the case of a strong $\eps$-coreset, and then for the case of a weak $\eps$-coreset.

\noindent\textbf{Proof for a strong $\eps$-coreset.}
Let $(P,u)$ be a strong $\eps$-coreset for $(P,w)$. We prove that $(Q,u')$ is a strong $\eps$-coreset for $(Q,m)$. Observe that
\begin{align} \label{firstoneneeded}
\abs{\smi (m_i-u'_i)\norm{q_i-x}^2 } =
\abs{\smi (m_i-u'_i) \sigma^2 \norm{p_i-y}^2 } = 
\abs{\smi \norm{m}_1   \sigma^2 (w_i-u_i) \norm{p_i-y}^2 },
\end{align}
where the first equality holds by~\eqref{keyobs}, and the second holds by the definition of $w$ and $u'$.

Since $(P,u)$ is a strong $\eps$-coreset for $(P,w)$
\begin{align} \label{secondoneneeded}
\abs{\smi \norm{m}_1  \sigma^2 (w_i-u_i) \norm{p_i-y}^2}
\leq\eps {\smi \norm{m}_1  \sigma^2 w_i \norm{p_i-y}^2}
= \eps {\smi m_i \norm{q_i-x}^2 },  
\end{align}
where the equality holds by~\eqref{keyobs} and since $ w=\frac{m}{\norm{m}_1} $.

The proof for the case of a strong $\eps$-coreset concludes by combining~\eqref{firstoneneeded} and~\eqref{secondoneneeded} as
$$\abs{\smi (m_i-u'_i)\norm{q_i-x}^2 }\leq \eps \smi m_i \norm{q_i-x}^2.$$
\newcommand{\mut}{\tilde{\mu}}

\noindent\textbf{Proof for a weak $\eps$-coreset.}
Let $(P,u)$ be a weak $\eps$-coreset for $(P,w)$. We prove that $(Q,u')$ is a weak $\eps$-coreset for $(Q,m)$. First, we observe the following equalities
\begin{equation}\label{reductiontop}
\begin{split}
&\smi m_i \norm{q_i-\frac{\smi u_i q_i}{\norm{u}_1} }^2
=\sigma^2 \smi m_i \norm{p_i-\frac{\frac{\smi u_i q_i}{\norm{u}_1} - \mu }{\sigma}}^2
\\&= \sigma^2  \smi   m_i\norm{p_i-\frac{\frac{\smi u_i (\sigma p_i + \mu )}{\norm{u}_1} - \mu }{\sigma}}^2 
=\sigma^2 \smi  m_i \norm{p_i-\frac{\smi u_i \sigma p_i }{\norm{u}_1\sigma}+ \frac{\smi u_i \mu}{\norm{u}_1 \sigma} -\frac{\mu }{\sigma}}^2
\\&=\sigma^2  \smi  m_i \norm{p_i-\frac{\smi u_i  p_i }{\norm{u}_1}+ \frac{ \mu}{ \sigma} -\frac{\mu }{\sigma}}^2 
=\sigma^2 \norm{m}_1 \smi  w_i \norm{p_i-\frac{\smi u_i  p_i }{\norm{u}_1}}^2,
\end{split}
\end{equation}
where the first equality holds by~\eqref{keyobs}, the second holds by the definition of $p_i$ for every $i\in [n]$, the third and fourth are just a rearrangements, and the last holds by the definition of $w$.

Since $(P,u)$ is a weak $\eps$-coreset for $(P,w)$, we get that
\begin{equation}\label{backtoq}
\begin{split}
\sigma^2 \norm{m}_1 \smi  w_i \norm{p_i-\frac{\smi u_i  p_i }{\norm{u}_1}}^2 
&\leq \sigma^2 \norm{m}_1 (1+\eps)\smi w_i \norm{p_i- \vec{0}}^2 
\\&=  \sigma^2  (1+\eps)\smi m_i \norm{p_i- \vec{0}}^2
=    (1+\eps)\smi m_i \norm{q_i- \mu}^2,
\end{split}
\end{equation}
where the inequality holds since the mean of $(P,w)$ is $\vec{0}$, the first equality holds since $w=\frac{m}{\norm{m}_1} $, and the second holds by~\eqref{keyobs}. Hence, combining~\eqref{reductiontop} with~\eqref{backtoq} proves the lemma.
\end{proof}

\subsection{Strong Coreset for a Normalized Weighted Set}\label{sec:strong-coreset-red}

Given a normalized weighted set $(P,w)$ as in Definition~\ref{defNormalize}, in the following lemma we prove that a weighted set $(P,u)$ is a strong $(\eps,\delta)$-coreset for $(P,w)$ if some three properties related to the mean, variance, and weights of $(P,u)$ hold with probability at least $1-\delta$.
\begin{lemma}\label{strong_coreset_lema}
Let $(P,w)$ be a normalized weighted set  of $n$ points in $\REAL^d$, $\eps,\delta \in (0,1)$, and $u\in \REAL^n$ such that with probability at least $1-\delta$,
\begin{enumerate}
  \item $\norm {\smi u_i p_i} \leq  \eps $,\label{1}
  \item $\abs{1- \smi u_i } \leq \eps$, and \label{2}
  \item $\abs{ 1- \smi u_i \cdot \norm {p_i}^2} \leq \eps$. \label{3}
\end{enumerate}				
Then, $(P,u)$ is a strong $(2\eps,\delta)$-coreset for $(P,w)$, i.e., with probability at least $1-\delta$, for every $x\in\REAL^d$ we have that	
\begin{align}
\bigg| \smi (w_i - u_i ) \norm{p_i-x}^2  \bigg| \leq  
2\eps \smi w_i\norm{p_i-x}^2.  \label{res}
\end{align}
\end{lemma}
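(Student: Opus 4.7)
The plan is to exploit the expansion $\norm{p_i - x}^2 = \norm{p_i}^2 - 2 p_i^T x + \norm{x}^2$, which decomposes the quantity we need to bound into three terms that correspond exactly to the three hypotheses of the lemma.

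First, I would condition on the event (of probability at least $1-\delta$) on which all three conditions~\ref{1},~\ref{2},~\ref{3} hold. Using the normalization properties~\ref{a},~\ref{b},~\ref{c} from Definition~\ref{defNormalize}, the three conditions translate cleanly into bounds on the differences $w_i - u_i$: specifically, $\smi(w_i - u_i) p_i = -\smi u_i p_i$ has norm at most $\eps$, while $|\smi(w_i - u_i)| = |1-\smi u_i| \le \eps$ and $|\smi(w_i - u_i)\norm{p_i}^2| = |1 - \smi u_i \norm{p_i}^2| \le \eps$.

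Next, I would write
\[
\smi (w_i - u_i)\norm{p_i - x}^2 \;=\; \smi (w_i - u_i)\norm{p_i}^2 \;-\; 2 x^T\!\smi (w_i - u_i) p_i \;+\; \norm{x}^2 \smi (w_i - u_i),
\]
apply the triangle inequality together with Cauchy--Schwarz on the middle term, and plug in the three bounds above to obtain
\[
\left|\smi (w_i - u_i)\norm{p_i - x}^2\right| \;\le\; \eps + 2\eps\norm{x} + \eps\norm{x}^2 \;=\; \eps\bigl(1+\norm{x}\bigr)^2.
\]

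Finally, I would compute the right-hand side of~\eqref{res}. Using the normalization properties of $(P,w)$,
\[
\smi w_i \norm{p_i - x}^2 \;=\; \smi w_i\norm{p_i}^2 - 2 x^T\!\smi w_i p_i + \norm{x}^2\smi w_i \;=\; 1 + \norm{x}^2.
\]
So the desired inequality~\eqref{res} reduces to $(1+\norm{x})^2 \le 2(1+\norm{x}^2)$, which is just the elementary inequality $(1-\norm{x})^2 \ge 0$.

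The whole argument is essentially a triangle-inequality bookkeeping on top of the standard ``simple statistics'' expansion from~\eqref{eqStatistics}; the only substantive step is the final comparison $(1+\norm{x})^2 \le 2(1+\norm{x}^2)$, which explains precisely why the $\eps$ in the hypotheses becomes a $2\eps$ in the conclusion. No obstacle is anticipated.
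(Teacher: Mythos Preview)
Your proof is correct and follows essentially the same approach as the paper: expand $\norm{p_i-x}^2$, apply the triangle inequality and Cauchy--Schwarz, and compare with $\smi w_i\norm{p_i-x}^2 = 1+\norm{x}^2$. The only cosmetic difference is in how the factor $2$ appears: the paper bounds the cross term via the weighted AM--GM inequality $2\norm{x}\,\norm{\smi u_ip_i} \le \eps\norm{x}^2 + \norm{\smi u_ip_i}^2/\eps \le \eps\norm{x}^2+\eps$, whereas you use $\norm{\smi u_ip_i}\le\eps$ directly and then the elementary inequality $(1+\norm{x})^2 \le 2(1+\norm{x}^2)$; your route is slightly cleaner.
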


\begin{proof}
First we have that,
\begin{align}
\smi w_i \norm{p_i-x}^2 =\smi w_i\norm{p_i}^2  -2x^T\smi w_i p_i + \norm{x}^2\smi w_i = 1+\norm{x}^2, \label{1+x2}
\end{align}
where the last equality holds by the attributes~\eqref{a}--\eqref{c} of the normalized weighted set $(P,w)$. By rearranging the left hand side of~\eqref{res} we get,
\begin{align}
&\left|\smi (w_i-u_i)\norm{p_i - x}^2 \right| = \left |\smi (w_i-u_i) (\norm{p_i}^2 - 2p_i^Tx + \norm{x}^2 )\right| \\
&\leq \left|  \smi (w_i-u_i)\norm{p_i}^2   \right| + \left|   \norm{x}^2 \smi (w_i-u_i)  \right|  + \left|  2x^T\smi (w_i-u_i)p_i \right| \label{triangle inequality}\\
&=\left| 1- \smi u_i\norm{p_i}^2   \right| +\norm{x}^2  \left| 1- \smi u_i \right|  + \left|  2x^T\smi u_ip_i \right| \label{w_i is distribution and rerranging} \\
&\leq \eps +\eps\norm{x}^2 + 2\norm{x}  \norm{ \smi u_ip_i } \label{almost done},
\end{align}
where~\eqref{triangle inequality} holds by the triangle inequality,~\eqref{w_i is distribution and rerranging} holds by attributes~\eqref{a}--\eqref{c}, and~\eqref{almost done} holds by combining assumptions~\eqref{2},~\eqref{3}, and the Cauchy-Schwarz inequality respectively. We also have for every $a,b\geq 0$ that $2ab \leq a^2 + b^2$, hence,
\begin{align}
2ab = 2\sqrt{\eps}a\frac{b}{\sqrt{\eps}} \leq \eps a^2 + \frac{b^2}{\eps}. \label{2ab bound}
\end{align}
By~\eqref{2ab bound} and assumption~\eqref{1} we get that,
\begin{align}
 2\norm{x}  \norm{ \smi u_ip_i } \leq \eps \norm{x}^2 + \frac{\norm{ \smi u_ip_i }^2}{\eps} \leq \eps \norm{x}^2  +  \frac{\eps^2}{\eps} =\eps \norm{x}^2  + \eps. \label{eps(x +4)}
\end{align}
Lemma~\ref{strong_coreset_lema} now holds by plugging~\eqref{eps(x +4)} in~\eqref{almost done} as,
\begin{align}
\bigg|\smi (w_i-u_i)\norm{p_i - x}^2 \bigg|  &\leq \eps +\eps\norm{x}^2 + \eps \norm{x}^2  + \eps = 2\eps +2\eps \norm{x}^2\\
&  = 2\eps(1+ \norm{x}^2) = 2\eps \smi {w_i \norm{p_i-x}^2} \label{eqFinal},
\end{align}
where the last equality holds by~\eqref{1+x2}.

Observe that if assumptions~\eqref{1},~\eqref{2} and~\eqref{3} hold with probability at least $1-\delta$, then~\eqref{eqFinal} hold also with probability $1-\delta$. We therefore obtain an $(2\eps,\delta)$-coreset.
\end{proof}

\subsection{Weak Coreset for a Normalized Weighted Set}\label{sec:weak-coreset-red}
Given a normalized weighted set $(P,w)$ as in Definition~\ref{defNormalize}, in the following lemma we prove that a weighted set $(P,u)$ is a weak $(\eps,\delta)$-coreset for $(P,w)$ if and only if with probability at least $1-\delta$ the squared $\ell_2$-norm of the weighted mean of $(P,u)$ is smaller that $\eps$.

\begin{lemma}\label{weak_coreset_lema}
Let $(P,w)$ be a normalized weighted set  of $n$ points in $\REAL^d$, $\eps\in (0,1)$, and $u\in \REAL^n$ be a weight vector.
Let $\displaystyle{\overline{p} =\smi w_i p_i}$ and $\displaystyle{\overline{s} = \smi \frac{u_i}{\norm{u}_1}p_i}$. Then, $(P,u)$ is a weak $\eps$-coreset for $(P,w)$, i.e.,
\[ 
\smi w_i \norm{p_i-\overline{s}}^2 \leq (1+ \varepsilon)\smi w_i \norm{p_i-\overline{p}}^2
\]
if and only if 	
\[ 
\norm{\overline{s}}^2  \leq\varepsilon.  
\]\label{first}
\end{lemma}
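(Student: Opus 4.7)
The plan is to prove both directions of the equivalence simultaneously via a direct algebraic computation, exploiting the three defining properties of a normalized weighted set to reduce both sides of the coreset inequality to simple closed forms. The key observation is that under the normalization, the ``optimal'' quantity $\bar p$ is actually the origin and the total weighted variance equals one, which collapses the whole inequality to a comparison involving only $\|\bar s\|^2$.

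First, I would use property~\eqref{b} of Definition~\ref{defNormalize} to note $\bar p = \sum_{i=1}^n w_i p_i = \mathbf{0}_d$, and then property~\eqref{c} to observe that $\sum_{i=1}^n w_i\|p_i - \bar p\|^2 = \sum_{i=1}^n w_i\|p_i\|^2 = 1$. This pins down the right-hand side of the weak-coreset inequality as $(1+\eps)\cdot 1 = 1+\eps$.

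Next I would expand the left-hand side term $\|p_i - \bar s\|^2 = \|p_i\|^2 - 2 p_i^T \bar s + \|\bar s\|^2$, multiply by $w_i$, and sum. Using property~\eqref{c} on the first summand, property~\eqref{b} on the cross term (so the middle term vanishes entirely), and property~\eqref{a} on the last summand, the sum simplifies to
\begin{equation*}
\sum_{i=1}^n w_i\|p_i - \bar s\|^2 \;=\; 1 \;-\; 2\bar s^T\Bigl(\sum_{i=1}^n w_i p_i\Bigr) \;+\; \|\bar s\|^2 \sum_{i=1}^n w_i \;=\; 1 + \|\bar s\|^2.
\end{equation*}
With both sides in hand, the coreset inequality $\sum w_i\|p_i - \bar s\|^2 \le (1+\eps)\sum w_i\|p_i - \bar p\|^2$ becomes exactly $1 + \|\bar s\|^2 \le 1 + \eps$, i.e.\ $\|\bar s\|^2 \le \eps$, and since every implication in the chain is an equality, the equivalence holds in both directions.

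I do not expect a real obstacle here; the result is essentially a bookkeeping exercise that shows why the normalization in Definition~\ref{defNormalize} was set up this way. The only small subtlety worth flagging is that the statement in Definition~\ref{def:weakCoreset} uses an absolute value, whereas the lemma writes the inequality without one; this is harmless because $\bar p$ is the weighted mean of $(P,w)$ and hence the unconstrained minimizer of $x \mapsto \sum w_i\|p_i - x\|^2$, so $\sum w_i\|p_i-\bar p\|^2 \le \sum w_i\|p_i-\bar s\|^2$ automatically and the absolute value may be dropped.
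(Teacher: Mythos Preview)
Your proof is correct and essentially identical to the paper's: both compute $\sum_i w_i\|p_i-\bar p\|^2 = 1$ and $\sum_i w_i\|p_i-\bar s\|^2 = 1+\|\bar s\|^2$ via the three normalization properties, then observe that the weak-coreset inequality reduces to $\|\bar s\|^2 \le \eps$. The paper phrases the final step as a two-case analysis rather than a chain of equivalences, but the content is the same.
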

\begin{proof}
Observe that $\overline{p}$ is the weighted mean of the points in $P$, since it minimizes the sum of the squared distances from the points in $P$ to it, thus,
\begin{align}\label{mup}
&\smi w_i\norm{p_i - \overline{p}}^2 =\smi w_i \norm{p_i -  \sum_{j =1}^n w_j  p_j }^2 = \smi w_i \norm{p_i-0}^2 = 1,
\end{align}
where the second equality holds by Assumption~\ref{b} of a normalized weighted set and the last holds by Assumption~\ref{c} of a normalized weighted set. We also have that,
\begin{align}\label{mus}
&\smi w_i \norm{p_i-\overline{s}}^2 =  \smi w_i \norm{p_i}^2  -2\overline{s}^T\smi w_i  p_i +\norm{\overline{s}}^2\smi  w_i = 1+ \norm{\overline{s}}^2,
\end{align}
where the last equality holds by Assumptions~\ref{a}--\ref{c} of a normalized weighted set.
Using~\eqref{mup} and~\eqref{mus} we finish the proof by looking at the following two cases:

if $\norm{\overline{s}}^2 >\varepsilon$ then
 $$\smi w_i \norm{p_i-\overline{s}}^2 =1+ \norm{\overline{s}}^2 > 1+\eps > (1+\eps) \smi w_i\norm{p_i - \overline{p}}^2,$$

if $\norm{\overline{s}}^2 \leq\varepsilon$ then
 $$\smi w_i \norm{p_i-\overline{s}}^2 =1+ \norm{\overline{s}}^2 \leq 1+\eps \leq (1+\eps) \smi w_i\norm{p_i - \overline{p}}^2.$$
\end{proof}

\subsection{From Strong to Weak Coreset Constructions} \label{sec:strongToWeak}
The following lemma proves that any strong $\sqrt{\eps}$-coreset for the mean problem is also a week $\eps$-coreset for the mean problem. 
\begin{lemma}\label{strongToWeek}
Let $(P,w)$ be a normalized weighted set of $n$ points in $\REAL^d$, $\eps\in (0,\frac{1}{36})$ and let $(P,u)$ be a strong $\sqrt{\eps}$-coreset for $(P,w)$. Then $(P,u)$ is also a weak $(36\eps)$-coreset for $(P,w)$, i.e.,
\begin{equation} \label{eqtoProve}
\smi w_i \norm{p_i-\bar{s}}^2 \leq (1+36\eps)\min_{x\in\REAL^d} \smi w_i\norm{p_i-x}^2,
\end{equation}
where $\bar{s} = \smi \frac{u_i}{\norm{u}_1}p_i$ is the weighted mean of $(P,u)$.
\end{lemma}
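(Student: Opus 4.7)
The plan is to reduce the claim via Lemma~\ref{weak_coreset_lema} and then extract the needed moment-type bounds on $u$ by plugging three carefully chosen queries $x$ into the strong-coreset hypothesis.

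First I would invoke Lemma~\ref{weak_coreset_lema}: since $(P,w)$ is normalized, $\bar{p}=\mathbf{0}$ and $\sum_i w_i\norm{p_i-\bar p}^2=1$, so proving~\eqref{eqtoProve} is equivalent to proving $\norm{\bar s}^2\le 36\eps$. Writing $v=\sum_i u_i p_i$ and $\norm{u}_1=\sum_i u_i$ (taking $u\ge 0$ as in the construction), we have $\bar s=v/\norm{u}_1$, so the target reduces to $\norm{v}^2\le 36\eps\cdot\norm{u}_1^2$.

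Next I would exploit the strong $\sqrt{\eps}$-coreset assumption. Using equality~\eqref{1+x2} from the proof of Lemma~\ref{strong_coreset_lema}, it says that for every $x\in\REAL^d$,
\[
\left|\smi(w_i-u_i)\norm{p_i-x}^2\right|\le \sqrt{\eps}(1+\norm{x}^2).
\]
Expanding the left-hand side with the normalized-set identities $\sum_i w_i=1$, $\sum_i w_i p_i=\mathbf{0}$, $\sum_i w_i\norm{p_i}^2=1$ yields
\[
\left|A+2x^\top v+\norm{x}^2 B\right|\le\sqrt{\eps}(1+\norm{x}^2),
\]
where $A=1-\sum_i u_i\norm{p_i}^2$ and $B=1-\norm{u}_1$. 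I would then test this inequality on three queries: (i) $x=\mathbf{0}$ to obtain $|A|\le\sqrt{\eps}$; (ii) $x=te$ for a fixed unit $e$ with $t\to\infty$ to obtain $|B|\le\sqrt{\eps}$ (dividing through by $t^2$); and (iii) assuming $v\ne\mathbf{0}$, the unit query $x=v/\norm{v}$, which gives $|A+2\norm{v}+B|\le 2\sqrt{\eps}$ and hence $2\norm{v}\le 2\sqrt{\eps}+|A|+|B|\le 4\sqrt{\eps}$, i.e., $\norm{v}\le 2\sqrt{\eps}$. The case $v=\mathbf{0}$ is trivial since then $\bar s=\mathbf{0}$.

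Finally I would combine these estimates: from $|B|\le\sqrt{\eps}$ and $\eps<1/36$, we get $\norm{u}_1\ge 1-\sqrt{\eps}\ge 5/6$, so
\[
\norm{\bar s}^2=\frac{\norm{v}^2}{\norm{u}_1^2}\le\frac{4\eps}{(5/6)^2}=\frac{144}{25}\eps\le 36\eps,
\]
which by Lemma~\ref{weak_coreset_lema} establishes~\eqref{eqtoProve}. The only mildly delicate step is choosing the right test queries so that the three unknown scalars $A$, $B$, $\norm{v}$ are pinned down cleanly; the constant $36$ is very loose, arising essentially from squaring $1-\sqrt{\eps}$ in the denominator, and any $\eps$ bounded away from $1$ would suffice to push the argument through.
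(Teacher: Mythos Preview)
Your proposal is correct and follows the same overall strategy as the paper: reduce to $\norm{\bar s}^2\le 36\eps$ via Lemma~\ref{weak_coreset_lema}, then plug specially chosen queries $x$ into the strong-coreset inequality to bound the three scalars $A=1-\sum_i u_i\norm{p_i}^2$, $B=1-\sum_i u_i$, and $\norm{v}$ with $v=\sum_i u_i p_i$. The one notable difference is in how $|B|$ is bounded. The paper, in order to isolate $B$, takes a unit vector $y$ orthogonal to $v$ so that the cross term $2x^\top v$ vanishes; since such a $y$ exists only when $d\ge 2$, the paper is forced into a separate case analysis for $d=1$ (plugging $x=\pm 1$). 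Your limiting argument $x=te$, $t\to\infty$ eliminates this case split entirely and even yields the sharper constants $|B|\le\sqrt{\eps}$ and $\norm{v}\le 2\sqrt{\eps}$ (versus the paper's $3\sqrt{\eps}$ for each), so your final bound $\norm{\bar s}^2\le \frac{144}{25}\eps$ is tighter than the paper's $36\eps$. Otherwise the two proofs are structurally identical.
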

\begin{proof}
First, observe that if $\norm{\bar{s}} = 0$, then by Lemma~\ref{weak_coreset_lema}, \eqref{eqtoProve} holds immediately. We therefore assume that $\norm{\bar{s}} \neq 0$.

Since $(P,u)$ is a strong $\sqrt{\eps}$-coreset for $(P,w)$, for every $x\in \REAL^d$ we have that
\begin{equation} \label{eq:prop1}
\left|\smi w_i\cdot \norm{p_i-x}^2 - \smi u_i\cdot \norm{p_i-x}^2\right| \leq \sqrt{\varepsilon} \smi w_i\cdot \norm{p_i-x}^2,
\end{equation}
and
\begin{equation}\label{eq:forEveryx}
\smi w_i\cdot \norm{p_i-x}^2 = \smi w_i\norm{p_i}^2 -2x^T\smi w_ip_i +\smi w_i \norm{x}^2 = 1+\norm{x}^2,
\end{equation}
where the last equality holds by the properties of $(P,w)$ in Definition~\ref{defNormalize}.

Therefore, for every $x\in \REAL^d$ we have that
\begin{equation}\label{eq:forEveryx2}
\begin{split}
& \left|\smi w_i\cdot \norm{p_i-x}^2 - \smi u_i\cdot \norm{p_i-x}^2\right| = \left|1+\norm{x}^2 - \smi u_i\cdot \norm{p_i-x}^2\right|\\
& = \left|1+\norm{x}^2 - \smi u_i \norm{p_i}^2 +2x^T \smi u_i p_i -\smi u_i\norm{x}^2\right|,
\end{split}
\end{equation}
where the first equality is by~\eqref{eq:forEveryx}.

Combining~\eqref{eq:prop1},~\eqref{eq:forEveryx} and~\eqref{eq:forEveryx2} yields that for every $x\in \REAL^d$ the following holds
\begin{equation}\label{eq:forEveryx3}
\left|1+\norm{x}^2 - \smi u_i \norm{p_i}^2 +2x^T \smi u_i p_i -\smi u_i\norm{x}^2\right| \leq \sqrt{\eps}(1+\norm{x}^2).
\end{equation}

We now prove that $\norm{\smi u_ip_i} \leq 6\sqrt{\eps}$ using the following case analysis: \textbf{Case (i): }$d=1$, and \textbf{Case (ii): }$d \geq 2$.

\textbf{Case (i): } $d=1$. Plugging $x = 0$ in~\eqref{eq:forEveryx3} yields
\begin{equation}\label{eqx0}
\left|1 - \smi u_i p_i^2 \right| \leq \sqrt{\eps}.
\end{equation}
Plugging $x=1$ in~\eqref{eq:forEveryx3} and combining with~\eqref{eqx0} yields
\begin{equation} \label{eqx1}
\abs{1+2\smi u_i p_i-\smi u_i} \leq 3\sqrt{\eps}.
\end{equation}
Plugging $x=-1$ in~\eqref{eq:forEveryx3} and combining with~\eqref{eqx0} yields
\begin{equation} \label{eqx-1}
\abs{1-2\smi u_i p_i-\smi u_i} \leq 3\sqrt{\eps}.
\end{equation}
Combining~\eqref{eqx1} and~\eqref{eqx-1} implies that
\begin{enumerate}
    \item $\abs{\smi u_ip_i} \leq 3\sqrt{\eps},$ \label{need1}
    \item and $\abs{1-1\smi u_i} \leq 3\sqrt{\eps}.$\label{need11}
\end{enumerate}
Hence, Combining~\eqref{need1} and~\eqref{need11} proves Case (i) as
\[
\smi \frac{u_i}{\norm{u}_1}p_i=\abs{\frac{\smi u_ip_i}{\smi u_i}}\leq \frac{3\sqrt{\eps}}{1-3\sqrt{\eps}}\leq \frac{3\sqrt{\eps}}{1/2} = 6\sqrt{\eps},
\]
where the second inequality is since $\eps\in(0,\frac{1}{36})$.

\textbf{Case (ii): }$d\geq 2$.
We prove Case (ii) by proving the following $3$ properties
\begin{enumerate}[(a)]
  \item $\left|1 - \smi u_i\cdot \norm{p_i}^2\right| \leq \sqrt{\eps}$ \label{toProve1}
  \item $\left|1 -\smi u_i\right| \leq 3\sqrt{\eps}$ \label{toProve2}
  \item $\norm{\smi u_ip_i} \leq 3\sqrt{\eps}$ \label{toProve3}
\end{enumerate}

\textbf{Proof of~\eqref{toProve1}: }This step holds immediately by plugging $x = \textbf{0}_d$ in~\eqref{eq:forEveryx3}.

\textbf{Proof of~\eqref{toProve2}: }Let $s^\bot \in \REAL^d$ be an arbitrary vector that is perpendicular to $\bar{s}$ and let $y = \frac{s^\bot}{\norm{s^\bot}}$. Such a vector $s^\bot$ exists due to our assumption that $\norm{\bar{s}} \neq 0$. We now have that
\begin{equation}\label{eshieshi}
\left|\left(1 - \smi u_i \norm{p_i}^2\right) + \left(1 -\smi u_i\right)\right| = \left|1+\norm{y}^2 - \smi u_i \norm{p_i}^2 +2y^T \smi u_i p_i -\smi u_i\norm{y}^2\right| \leq 2\sqrt{\eps},
\end{equation}
where the first derivation holds by combining that $y$ is perpendicular to $\smi u_i p_i$ (by definition) and that $\norm{y}^2=1$, and the second derivation holds by plugging $x=y$ in~\eqref{eq:forEveryx3}.

Combining~\eqref{eshieshi} with Property~\eqref{toProve1} proves Property~\eqref{toProve2} as
\[
-3\sqrt{\eps} \leq \left(1 -\smi u_i\right) \leq 3\sqrt{\eps}.
\]

\textbf{Proof of~\eqref{toProve3}: }Let $z = \frac{\bar{s}}{\norm{\bar{s}}} = \frac{\smi u_i p_i}{\norm{\smi u_i p_i}}$. We now have that
\begin{align}
& \left|\left(1 - \smi u_i \norm{p_i}^2\right) + \left(1-\smi u_i\norm{z}^2\right) +2\norm{\smi u_i p_i} \right| \nonumber\\
&= \left|\left(1 - \smi u_i \norm{p_i}^2\right) + \left(1-\smi u_i\norm{z}^2\right) +2z^T \smi u_i p_i \right| \label{defZ}\\
& =\left|1+\norm{z}^2 - \smi u_i \norm{p_i}^2 +2z^T \smi u_i p_i -\smi u_i\norm{z}^2\right|\label{defZ2}
\leq 2\sqrt{\eps},
\end{align}
where~\eqref{defZ} holds by the definition of $z$, the first derivation in \eqref{defZ2} holds since $\norm{z}=1$, and the second derivation in~\eqref{defZ2} holds by plugging $x=z$ in~\eqref{eq:forEveryx2}.
Therefore,
\begin{equation}\label{eqAlmostThere}
\left|\left(1 - \smi u_i \norm{p_i}^2\right) + \left(1-\smi u_i\norm{z}^2\right) +2\norm{\smi u_i p_i} \right| \leq 2\sqrt{\eps}.
\end{equation}

Property~\ref{toProve3} now holds by combining~\eqref{eqAlmostThere} with Properties~\ref{toProve1}--\ref{toProve2} as
\[
2\norm{\smi u_i p_i} \leq 6\sqrt{\eps}.
\]
Hence, combining Property~\ref{toProve2} with~\ref{toProve3} satisfies Case (ii) as
\[
\smi \frac{u_i}{\norm{u}_1}p_i=\norm{\frac{\smi u_i p_i}{\smi u_i}}\leq \frac{3\sqrt{\eps}}{1-3\sqrt{\eps}}\leq 6\sqrt{\eps},
\]
where the last inequality holds since $\eps \in(0,\frac{1}{36})$.

By Case (i) and Case (ii) we have that $\norm{\smi \frac{u_i}{\norm{u}_1}p_i} \leq 6\sqrt{\eps}$ for any $d\geq 1$.
Lemma~\ref{strongToWeek} now holds by substituting $u$ and $\bar{s} = \smi \frac{u_i}{\norm{u}_1}p_i$ in Lemma~\ref{weak_coreset_lema}.
\end{proof}

\section{Strong and Weak $(\eps,\delta)$-Coreset Constructions}\label{sec:strondelta}
In this section, we aim to compute strong and weak $(\eps,\delta)$-coreset for a normalized weighted set $(P,w)$.

In Section~\ref{sec:sensitivityCoresets} we present a strong coreset construction result which utilizes the sensitivity sampling framework~\cite{DBLP:journals/corr/BravermanFL16}. We then combine this result with the reduction result from strong to weak coresets (see Section~\ref{sec:strongToWeak}) to obtain a weak coreset construction.

In Section~\ref{sec:bernstein} we utilize the Bernstein inequality to obtain a weak coreset for an input set of points contained inside the unit ball. We then show how to leverage this result in order to compute both a strong coreset, based on non-uniform sampling and reweighting of the points. We then obtain a weak coreset by combining the strong coreset construction result with the reduction from Section~\ref{sec:strongToWeak}.
Those weak and strong coresets are smaller than the ones obtained via the sensitivity framework in Section~\ref{sec:sensitivityCoresets}.

\subsection{Sensitivity Based Coresets} \label{sec:sensitivityCoresets}
We now prove that using a smart reweighting scheme of a normalized weighted input set, we can pick a non-uniform random sample of the input, based on the smart weights, to obtain a strong $\eps$-coreset. This is based on the sensitivity framework suggested in~\cite{DBLP:journals/corr/BravermanFL16} and the sensitivity tight bound from~\cite{tremblay2018determinantal}.

\begin{definition} [\textbf{Definition 4.2 in~\cite{DBLP:journals/corr/BravermanFL16}}] \label{def:querySpace}
Let $(P,w)$ be a weighted set of $n$ points in $\REAL^d$. Let $Q$ be a set of items called queries. Let $f:P\times Q \to \REAL$ be a cost function. The tuple $(P,w,Q,f)$ is called a \emph{query space}.
\end{definition}

\begin{definition} [\textbf{Definition 4.5 in~\cite{DBLP:journals/corr/BravermanFL16}}] \label{def:VC}
For a query space $(P,w,Q,f)$, $q\in Q$ and $r\in [0,\infty)$ we define
\[
\range(q,r) = \br{p\in P \mid w(p)\cdot f(p,q) \leq r}.
\]
The dimension of $(P,w,Q,f)$ is the smallest integer $d'$ such that for every $C\subseteq P$ we have
\[
\left| \br{\range(q,r) \mid q\in Q, r\in [0,\infty)} \right| \leq |C|^{d'}.
\]
\end{definition}

\begin{theorem} [\textbf{Theorem 5.5 in~\cite{DBLP:journals/corr/BravermanFL16}}]\label{braverman}
Let $(P,w,Q,f)$ be a query space; see Definition~\ref{def:querySpace}, where $f$ is a non-negative function. Let $s:P\to [0,\infty)$ such that
\[
\sup_{q\in Q} \frac{w(p)f(p,q)}{\sum_{p\in P} w(p)f(p,q)} \leq s(p),
\]
for every $p\in P$ and $q\in Q$ such that the denominator is non-zero. Let $t = \sum_{p\in P} s(p)$ and let $d'$ be the dimension of the query space $(P,w,Q,f)$; See Definition~\ref{def:VC}. Let $c \geq 1$ be a sufficiently large constant and let $\varepsilon, \delta \in (0,1)$. Let $C$ be a random sample of
\[
|C| \geq \frac{ct}{\varepsilon^2}\left(d'\log{t}+\log{\frac{1}{\delta}}\right)
\]
points from $P$, such that $p$ is sampled with probability $s(p)/t$ for every $p\in P$. Let $u(p) = \frac{t\cdot w(p)}{s(p)|C|}$ for every $p\in C$. Then, with probability at least $1-\delta$, for every $q\in Q$ it holds that
\[
(1-\varepsilon)\sum_{p\in P} w(p)\cdot f(p,q) \leq \sum_{p\in C} u(p)\cdot f(p,q) \leq (1+\varepsilon)\sum_{p\in P} w(p)\cdot f(p,q).
\]
\end{theorem}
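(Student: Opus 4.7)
The plan is to adapt the classical sensitivity-sampling argument. The proof breaks into a single-query concentration step (Bernstein's inequality exploiting the sensitivity bound) followed by a uniform-convergence step (an $\varepsilon$-approximation bound for the range space defined in Definition~\ref{def:VC}).

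For a fixed $q\in Q$ with $S(q) := \sum_{p\in P} w(p)f(p,q) > 0$, I would set $Y_p = u(p)f(p,q) = \frac{t\,w(p)f(p,q)}{s(p)|C|}$ for each sampled $p$. Sampling $p$ with probability $s(p)/t$ makes $\mathbb{E}[Y_p] = S(q)/|C|$, so the estimator $\sum_{p\in C} Y_p$ is unbiased for $S(q)$. By the sensitivity hypothesis, $w(p)f(p,q)/s(p)\leq S(q)$, and hence each term satisfies $0\leq Y_p \leq tS(q)/|C|$, while a short direct computation gives $\mathrm{Var}(Y_p)\leq tS(q)^2/|C|^2$. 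Bernstein's inequality applied to $|C|$ i.i.d.\ copies then yields a one-query deviation bound of the form $\Pr\!\bigl(|\sum_{p\in C} Y_p - S(q)| > \varepsilon S(q)\bigr) \leq 2\exp(-c_1|C|\varepsilon^2/t)$, which is at most $\delta$ as soon as $|C| = \Omega(t\log(1/\delta)/\varepsilon^2)$.

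To upgrade this to a simultaneous guarantee over all $q\in Q$, I would invoke the $\varepsilon$-approximation machinery for range spaces of bounded combinatorial dimension. By Definition~\ref{def:VC}, the family $\{\range(q,r) : q\in Q,\, r\in[0,\infty)\}$ has dimension at most $d'$, and this transfers to a pseudo-dimension bound on the normalized non-negative function class $\{p\mapsto w(p)f(p,q)/(s(p)\,S(q))\}_{q\in Q}$, whose values lie in $[0,1]$. A standard $\varepsilon$-sample theorem (e.g.\ Li-Long-Srinivasan / Talagrand, as invoked in~\cite{DBLP:journals/corr/BravermanFL16}) then shows that a random sample drawn according to $s(\cdot)/t$, of the size stated in the theorem, is with probability $1-\delta$ simultaneously an $\varepsilon$-approximation for every $q$ in this normalized class. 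Rescaling by $u(p) = tw(p)/(s(p)|C|)$ and multiplying through by $S(q)$ then converts the additive guarantee into the two-sided multiplicative bound claimed.

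The main obstacle is the second step: one must cast the problem in the right $\varepsilon$-approximation form so that the VC-type dimension of Definition~\ref{def:VC} controls the sample complexity of uniformly estimating a real-valued family whose values may have very different scales across different points. The sensitivity normalization $w(p)f(p,q)/(s(p)S(q))\leq 1$ is precisely the device that flattens these scales and enables a single uniform bound. Since Theorem~\ref{braverman} is quoted verbatim as Theorem~5.5 of~\cite{DBLP:journals/corr/BravermanFL16}, in practice the cleanest route is to cite that reference for the uniform convergence step rather than reconstruct the $\varepsilon$-approximation argument from scratch, while still presenting the one-query Bernstein calculation explicitly because it is short and illuminates the role of the sensitivity bound.
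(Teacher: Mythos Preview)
The paper does not prove this theorem at all: Theorem~\ref{braverman} is stated verbatim as Theorem~5.5 of~\cite{DBLP:journals/corr/BravermanFL16} and used as a black box in the proof of Lemma~\ref{tightSens}, with no accompanying proof or even sketch in the present paper. So there is no ``paper's own proof'' to compare your proposal against.

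That said, your outline is a faithful summary of the standard sensitivity-sampling argument as it appears in~\cite{DBLP:journals/corr/BravermanFL16} and related works: the key observation that the normalized functions $p\mapsto w(p)f(p,q)/(s(p)S(q))$ are bounded in $[0,1]$ thanks to the sensitivity hypothesis, followed by a uniform $\varepsilon$-approximation bound for a function class of bounded pseudo-dimension. Your own closing remark is the right instinct here---since the survey treats this as an imported tool, the appropriate action is simply to cite~\cite{DBLP:journals/corr/BravermanFL16} rather than reprove it.
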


\newcommand{\strongepsprobcoreset}{\textsc{Sensitivity-sampling-Coreset}}
\begin{algorithm2e}[ht]
\SetAlgoLined
    \caption{$\strongepsprobcoreset(P,w,\eps,\delta)$\label{alg:prob}}{
\begin{tabbing} \label{alg1}
\textbf{Input: \quad } \= A normalized weigthed set $(P,w)$ of $n\geq 2$ points in $\REAL^d$, such that $w=(\frac{1}{n}, \cdots, \frac{1}{n})$,
							 \\ \> an error parameter $\eps\in (0,1)$, and a probability of failure $\delta\in(0,1)$.\\
\textbf{Output:  } \> A weight vector $u\in[0,1)^n$ of cardinality $\norm{u}_0 \in O(\frac{1}{\varepsilon}\left(d+\log{\frac{1}{\delta}}\right))$ non-zero entries \\ \> that satisfies Lemma~\ref{tightSens} and Lemma~\ref{tightSensWeak} .
\end{tabbing}}
	\For{every $i \in \{1,\cdots,n\}$ \label{LineFor}} {
		$\displaystyle{s_i  :=   \frac{1}{2n}\left(1+\norm{p_i}^2\right)}$ \label{s(p) def}\\
	}
	$c:=$ the constant from Theorem~\ref{braverman}.\\
	$S:=$ a random sample (multi-set) of $|S| \geq \frac{2c}{\varepsilon}\left(d+\log{\frac{1}{\delta}}\right)$ points from $P$ sampled i.i.d from the distribution $s=(s_1,\cdots,s_n)$ \label{Ssampled} \\
	\For{every $i \in \{1,\cdots,n\}$} {
		$u_i := \frac{k_i 2\cdot w_i}{s(p_i)\abs{S}}$, where $k_i = |S\cap p_i|$ is the number of times $p_i$ was sampled for $S$. \label{usamled}
	}
	\Return $u$
\end{algorithm2e}

\begin{lemma}[Strong coreset via sensitivity sampling]\label{tightSens}
Let $(P,w)$ be a normalized weighted set of $n$ points in $\REAL^d$ such that $w=(\frac{1}{n},\cdots,\frac{1}{n})^T$. Let $c \geq 1$ be the constant from Theorem~\ref{braverman} and let $\eps, \delta \in (0,1)$.
Let $u$ be the output of a call to $\strongepsprobcoreset(P,w,\eps^2,\delta)$; see Algorithm~\ref{alg:prob}. Then $(P,u)$ is a strong $(\eps,\delta)$-coreset of cardinality $\norm{u}_0 \in O(\frac{1}{\varepsilon^2}\left(d+\log{\frac{1}{\delta}}\right))$ for $(P,w)$, i.e., with probability at least $1-\delta$, for every $x\in \REAL^d$ we have that
\[
(1-\varepsilon)\smi w_i\cdot \norm{p_i-x}^2 \leq \smi u_i\cdot \norm{p_i-x}^2   \leq (1+\varepsilon) \smi w_i\cdot \norm{p_i-x}^2 .
\]
\end{lemma}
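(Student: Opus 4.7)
The plan is to cast Lemma~\ref{tightSens} as an application of Theorem~\ref{braverman} to the query space $(P, w, Q, f)$ with $Q = \REAL^d$ and $f(p_i, x) = \norm{p_i-x}^2$. Three ingredients are required: per-point sensitivity upper bounds $s(p_i)$, a bound on the total sensitivity $t = \sum_i s(p_i)$, and a bound on the pseudo-dimension $d'$ of the query space from Definition~\ref{def:VC}.

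For the sensitivities, I would first use the three defining properties of the normalized weighted set $(P,w)$ to obtain
\[
\sum_{j=1}^n w_j\norm{p_j-x}^2 = \sum_j w_j\norm{p_j}^2 - 2x^T\!\sum_j w_j p_j + \norm{x}^2\!\sum_j w_j = 1 + \norm{x}^2,
\]
reducing the sensitivity ratio at $p_i$ (with $w_i=1/n$) to $\frac{1}{n}\cdot\frac{\norm{p_i-x}^2}{1+\norm{x}^2}$. A short Cauchy--Schwarz plus AM--GM computation, using $2p_i^T x \ge -2\norm{p_i}\norm{x} \ge -(1+\norm{p_i}^2\norm{x}^2)$, yields the clean inequality $\norm{p_i-x}^2 \le (1+\norm{p_i}^2)(1+\norm{x}^2)$, so $s(p_i) := (1+\norm{p_i}^2)/n$ is a valid sensitivity upper bound. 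Summing and using property~\eqref{c} of Definition~\ref{defNormalize} gives the crucial absolute-constant total sensitivity
\[
t = \sum_{i=1}^n s(p_i) = 1 + \sum_{i=1}^n w_i\norm{p_i}^2 = 2.
\]
With these choices, the sampling distribution $s(p_i)/t$ prescribed by Theorem~\ref{braverman} coincides exactly with the quantity $s_i = \frac{1}{2n}(1+\norm{p_i}^2)$ computed in line~\ref{s(p) def} of Algorithm~\ref{alg:prob}, and the theorem's reweighting $t\, w(p_i)/(s(p_i)\abs{C})$ matches the weight formula in line~\ref{usamled} (the multiplicity factor $k_i$ accounts for repeated picks in the multi-set $S$).

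For the dimension, I would observe that each range $\range(x,r) = \br{p_i\in P : w_i\norm{p_i-x}^2 \le r}$ is simply the intersection of $P$ with a closed Euclidean ball of radius $\sqrt{rn}$ centred at $x$. Since closed Euclidean balls in $\REAL^d$ form a VC class of dimension at most $d+1$, the Sauer--Shelah bound gives $d' = O(d)$.

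Finally, Lemma~\ref{tightSens} invokes the algorithm with input error $\eps^2$, so Algorithm~\ref{alg:prob} draws $|S|\ge\frac{2c}{\eps^2}(d+\log(1/\delta))$ samples in line~\ref{Ssampled}. Plugging $t=2$ and $d'=O(d)$ into the sample-size requirement of Theorem~\ref{braverman} shows this many samples is enough to attain error $\eps$ (not $\eps^2$), establishing the claimed strong $(\eps,\delta)$-coreset of cardinality $O\!\left(\frac{1}{\eps^2}(d+\log(1/\delta))\right)$. The main technical point is the tight sensitivity analysis that keeps $t$ an absolute constant: without the normalization of $(P,w)$, the total sensitivity could grow with the data and destroy the $n$-independent bound. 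Everything else is a direct plug-in to the sensitivity framework.
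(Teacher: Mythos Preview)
Your proposal is correct and follows essentially the same route as the paper: apply Theorem~\ref{braverman} with the sensitivity bound $s(p_i)=\frac{1}{n}(1+\norm{p_i}^2)$ and total sensitivity $t=2$, then read off the sample size. The only differences are in presentation. The paper obtains the sensitivity value by citing Lemma~D.1 of~\cite{tremblay2018determinantal}, which gives the exact supremum $\sup_x \frac{\norm{p_j-x}^2}{\sum_i\norm{p_i-x}^2}=\frac{1}{n}(1+\norm{p_j}^2/v)$, whereas you derive the same quantity as an upper bound directly via the elementary inequality $\norm{p_i-x}^2\le(1+\norm{p_i}^2)(1+\norm{x}^2)$; your argument is thus more self-contained. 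You also explicitly justify $d'=O(d)$ using the VC dimension of Euclidean balls and Sauer--Shelah, a step the paper leaves implicit when it plugs $d$ into the sample-size formula.
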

\begin{proof}
Mainly the proof here relies on Lemma D.1 of~\cite{tremblay2018determinantal} which states that for every $j\in[n]$, the sensitivity of the $j$th point is:
$$s(p_j) := \sup_{x\in\REAL^d} \frac{\norm{p_j -x}^2}{\smi \norm{p_i - x}^2} =\frac{1}{n}\left(1+\frac{\norm{p_j}^2}{v}\right),  $$
where $v=\smi 1/n\norm{p_i}^2$, and by our assumption we have that $v=1$. Hence, the total sensitivty is
$$t := \sum_{j=1}^n \sup_{x\in\REAL^d} \frac{\norm{p_j -x}^2}{\smi\norm{p_i - x}^2} = \sum_{j=1}\frac{1}{n}\left(1+{\norm{p_j}^2}\right) =1+\sum_{j=1}^n\frac{1}{n}\norm{p_j}^2 = 2.$$
By Theorem~\ref{braverman}, if we sample $|S| \geq \frac{2c}{\varepsilon^2}\left(d+\log{\frac{1}{\delta}}\right)$ i.i.d points from $P$ according to the distribution $(s(p_1)/t ,\cdots ,s(p_n)/t)$, and define the weights vector $u = (u_1,\cdots,u_n)$ where $u_i := \frac{k_i 2\cdot w_i}{s(p_i)\abs{S}}$ and $k_i = |S\cap p_i|$ is the number of times $p_i$ was sampled for $S$, then $(P,u)$ is a strong $(\eps,\delta)$-coreset for $(P,w)$.

In Line~\ref{s(p) def} we compute the distribution $(s_1, \cdots, s_n) = (s(p_1)/t ,\cdots ,s(p_n)/t)$. In Line~\ref{Ssampled} we sample the set  $S$ as required by Theorem~\ref{braverman}, and then we compute in Line~\ref{usamled} the final weights $u$.
\end{proof}

\begin{lemma}[Weak coreset via sensitivity sampling]\label{tightSensWeak}
Let $(P,w)$ be a normalized weighted set of $n$ points in $\REAL^d$ such that $w=(\frac{1}{n},\cdots,\frac{1}{n})^T$. Let $c \geq 1$ be the constant from Theorem~\ref{braverman} and let $\eps, \delta \in (0,1)$.
Let $u$ be the output of a call to $\strongepsprobcoreset(P,w,{\eps/36},\delta)$; see Algorithm~\ref{alg:prob}. Then $(P,u)$ is a weak $(\eps,\delta)$-coreset of cardinality $\norm{u}_0 \in O\left(\frac{1}{\varepsilon}\left(d+\log{\frac{1}{\delta}}\right)\right)$ for $(P,w)$, i.e., with probability at least $1-\delta$, for every  have that
\[
\smi \frac{1}{n}\norm{p_i-\overline{s}}^2\leq (1+\varepsilon)\min_{x\in\mathbb{R}^d}\smi \frac{1}{n}\norm{p_i-x}^2.
\]
\end{lemma}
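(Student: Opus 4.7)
The plan is to obtain Lemma~\ref{tightSensWeak} as an almost immediate consequence of combining Lemma~\ref{tightSens} with the strong-to-weak reduction in Lemma~\ref{strongToWeek}. Observe that Lemma~\ref{tightSens}, when applied with error parameter $\eps'$, states that the call $\strongepsprobcoreset(P,w,(\eps')^2,\delta)$ returns a strong $(\eps',\delta)$-coreset of cardinality $O\!\left(\frac{1}{(\eps')^2}(d+\log(1/\delta))\right)$. So the first step is to invoke Lemma~\ref{tightSens} with $\eps' = \sqrt{\eps}/6$, which corresponds exactly to the algorithm parameter $(\eps')^2 = \eps/36$ used in the statement of Lemma~\ref{tightSensWeak}. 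This yields a strong $(\sqrt{\eps}/6,\delta)$-coreset $(P,u)$ for $(P,w)$.

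Next, I would apply Lemma~\ref{strongToWeek} to convert strong accuracy into weak accuracy. Lemma~\ref{strongToWeek} states that for any $\eta \in (0,1/36)$, a strong $\sqrt{\eta}$-coreset is a weak $36\eta$-coreset. Setting $\eta = \eps/36$ (which lies in $(0,1/36)$ since $\eps \in (0,1)$), a strong $\sqrt{\eps/36}= \sqrt{\eps}/6$-coreset is automatically a weak $36 \cdot (\eps/36) = \eps$-coreset. Since strong coresets hold for every query simultaneously on the high-probability event, the same $1-\delta$ event underlying the output of Lemma~\ref{tightSens} supplies the weak guarantee.

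For the cardinality bound, plugging $(\eps')^2 = \eps/36$ into the size bound of Lemma~\ref{tightSens} gives
\[
\norm{u}_0 \in O\!\left(\frac{1}{\eps/36}\bigl(d+\log(1/\delta)\bigr)\right) = O\!\left(\frac{1}{\eps}\bigl(d+\log(1/\delta)\bigr)\right),
\]
matching the claim. Since both the strong coreset construction and the strong-to-weak reduction are already established, there is really no hard step here; the only thing to verify carefully is that the parameter $\eps/36$ lies inside the admissible ranges of both Lemma~\ref{tightSens} and Lemma~\ref{strongToWeek}, which is immediate from $\eps \in (0,1)$. The main conceptual point to emphasize in the writeup is simply that the choice of the algorithm parameter $\eps/36$ is precisely calibrated so that the strong error $\sqrt{\eps}/6$ feeds into Lemma~\ref{strongToWeek} with the right constant to produce a weak error of exactly $\eps$, trading a factor of $1/\eps$ in the coreset size (compared to Lemma~\ref{tightSens}) for the weaker guarantee.
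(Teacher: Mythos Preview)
Your proposal is correct and follows exactly the paper's own approach: the paper's proof consists of a single sentence stating that the lemma holds by combining Lemma~\ref{tightSens} with Lemma~\ref{strongToWeek}. Your writeup simply makes the parameter bookkeeping explicit, which is fine.
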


\begin{proof}
Lemma~\ref{tightSensWeak} immediately holds by combining Lemma~\ref{tightSens} with Theorem~\ref{strongToWeek}.
\end{proof}

\subsection{Bernstein Inequality for Smaller Coresets. } \label{sec:bernstein}
The following theorem is Theorem 6.1.1 from~\cite{tropp2015introduction}. 

\begin{theorem}[Matrix Bernstein.]\label{berns}
Consider a finite sequence $\{S_k\}$ of independent, random matrices with common dimension $d_1 \times d_2$. Assume that (i)  $E (S_k) = 0,$ and (ii) $\norm{S_k}\leq L $ for each index $k$.

Introduce the random matrix $Z = \sum_{k}S_k$.
Let $v(Z)$ be the matrix variance statistic of the sum:
\begin{align*}
    v(Z) &= \max\br{\norm{E(ZZ^T)},\norm{E(Z^TZ)}}
    \\&= \max\br{\norm{\sum_k E(S_kS_k^T)},\norm{\sum_k E(S_k^TS_k)}}.
\end{align*}
Then
$$E(\norm{Z}) = \sqrt{2v(Z)\log(d_1+d_2)} + \frac{1}{3}L\log(d_1+d_2).$$
Furthermore, for all $t\geq 0$,
$$\pr(\norm{Z} \geq t)\leq (d_1 +d_2) \exp\bigg(\frac{-t^2/2}{v(Z)+ Lt/3}\bigg).$$
\end{theorem}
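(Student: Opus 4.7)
The plan is to follow the matrix Laplace transform method, since the scalar Bernstein proof via moment generating functions extends to the matrix setting once one has Lieb's concavity theorem (or equivalently the Ahlswede--Winter subadditivity) as a substitute for the scalar multiplicativity of exponentials. Since the statement concerns a general (possibly rectangular) $d_1\times d_2$ matrix $Z$, I would first reduce to the Hermitian case via the Hermitian dilation
\[
\mathcal{H}(M)=\begin{pmatrix} 0 & M \\ M^{*} & 0 \end{pmatrix}\in\REAL^{(d_1+d_2)\times(d_1+d_2)},
\]
for which $\norm{\mathcal{H}(M)}=\norm{M}$ and $\mathcal{H}(Z)=\sum_k \mathcal{H}(S_k)$ is a sum of independent, centered, Hermitian matrices bounded in norm by $L$, whose variance parameter equals $v(Z)$. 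Thus it suffices to prove the tail bound for a sum $Y=\sum_k X_k$ of independent, centered, Hermitian $D\times D$ matrices with $\norm{X_k}\leq L$, where $D=d_1+d_2$.

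Next I would invoke the matrix Laplace transform inequality: for every $\theta>0$,
\[
\Pr\bigl(\lambda_{\max}(Y)\geq t\bigr)\leq e^{-\theta t}\,\E\,\tr\exp(\theta Y).
\]
The central (non-trivial) step is then to apply Lieb's concavity theorem, which yields the matrix cumulant subadditivity
\[
\E\,\tr\exp\Bigl(\sum_k \theta X_k\Bigr)\leq \tr\exp\Bigl(\sum_k \log\E\,e^{\theta X_k}\Bigr).
\]
This is the hard part; it is the place where a scalar proof really breaks down, and without Lieb's theorem one is stuck with much weaker (Golden--Thompson based) variants.

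Given that step, the remainder is routine. I would bound each matrix MGF using the centering $\E X_k=0$ and the bound $\norm{X_k}\leq L$ via the scalar inequality $e^x\leq 1+x+\tfrac{x^2/2}{1-L\theta/3}\cdot\text{(polynomial)}$ extended to matrices through the functional calculus, arriving at
\[
\log\E\,e^{\theta X_k}\preceq \frac{\theta^{2}/2}{1-L\theta/3}\,\E X_k^{2},\qquad 0<\theta<3/L.
\]
Summing over $k$, applying the monotonicity $\tr\exp(A)\leq D\,e^{\lambda_{\max}(A)}$, and using $\lambda_{\max}\bigl(\sum_k\E X_k^2\bigr)\leq v(Z)$, I obtain
\[
\Pr(\lambda_{\max}(Y)\geq t)\leq D\exp\!\Bigl(-\theta t+\tfrac{\theta^{2}v(Z)/2}{1-L\theta/3}\Bigr).
\]
Finally, optimizing in $\theta$ by the standard Bernstein choice $\theta=t/(v(Z)+Lt/3)$ produces the stated tail bound
\[
\Pr(\norm{Z}\geq t)\leq (d_1+d_2)\exp\!\Bigl(\tfrac{-t^{2}/2}{v(Z)+Lt/3}\Bigr),
\]
and the expectation bound follows by integrating the tail and splitting the integral at the value of $t$ where the Gaussian and exponential regimes balance, yielding the $\sqrt{2v(Z)\log(d_1+d_2)}+\tfrac{1}{3}L\log(d_1+d_2)$ term.
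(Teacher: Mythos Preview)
The paper does not prove this theorem at all: it is quoted verbatim as Theorem~6.1.1 from Tropp's monograph and used as a black box to derive Corollary~\ref{pointsinunitbenrs}. So there is no ``paper's own proof'' to compare against.

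That said, your sketch is a faithful outline of the standard proof as it appears in Tropp's book: Hermitian dilation to reduce to the self-adjoint case, the matrix Laplace transform bound, Lieb's concavity to obtain cumulant subadditivity, the Bernstein-type MGF bound $\log \E e^{\theta X_k}\preceq \tfrac{\theta^2/2}{1-L\theta/3}\,\E X_k^2$, and the usual optimization in $\theta$. The expectation bound via tail integration is also the standard route. One small remark: the statement as copied here has an equality sign in the expectation bound, but of course it is an inequality $E(\norm{Z})\leq\sqrt{2v(Z)\log(d_1+d_2)}+\tfrac{1}{3}L\log(d_1+d_2)$; your integration argument correctly yields the upper bound, not an identity.
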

The following corollary is an immediate result of Theorem~\ref{berns}.

\begin{corollary}[Bounding Points in the Unit Ball via Bernstein Inequality.]\label{pointsinunitbenrs}
 Let $\eps,\delta \in (0,1)$, $(P,w)$ be a set of $n$ points in $\REAL^d$, such that for every $i\in [n]$, $\norm{p_i}\leq 1$, and $\smi w_i =1$. Let $S$ be a sample of $k=\frac{4\log((d+1)/\delta)}{\eps}$ points, chosen i.i.d, where each $p_i\in P$ is sampled with probability $w_i$. Let $\overline{s} = \frac{1}{k}\sum_{s\in S}s$. Then with probability at least $1-\delta$ we have that,
$$\norm{\overline{s}}^2 \leq \eps.$$
\end{corollary}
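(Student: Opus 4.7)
The plan is to apply Theorem~\ref{berns} (Matrix Bernstein) directly to a sum of i.i.d.\ random vectors, viewed as $d\times 1$ matrices. For each $j \in \{1,\ldots,k\}$ let $s_j$ denote the $j$-th sample (so $s_j = p_i$ with probability $w_i$), and set $S_j = s_j/k$. Then $Z := \sum_{j=1}^k S_j = \overline{s}$, and Bernstein will yield a tail bound on $\norm{Z} = \norm{\overline{s}}$ that I will convert to the quadratic bound $\norm{\overline{s}}^2 \leq \eps$ by taking $t = \sqrt{\eps}$. To invoke the theorem I need (i) the centering condition $E(S_j) = 0$, which forces the implicit hypothesis $\smi w_i p_i = \mathbf{0}$ (consistent with how this corollary is applied in Section~\ref{sec:weaksublinear} to a normalized weighted set), and (ii) the uniform bound $\norm{S_j} \leq 1/k$, which follows from $\norm{p_i}\leq 1$.

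Next I would compute the matrix variance statistic. Since $S_jS_j^T = s_js_j^T/k^2$, linearity and i.i.d.\ give $\sum_j E(S_jS_j^T) = \tfrac{1}{k} \smi w_i p_ip_i^T$, whose spectral norm is at most $\smi w_i\norm{p_i}^2 \leq 1$ (by subadditivity and $\norm{p_ip_i^T} = \norm{p_i}^2 \leq 1$), hence bounded by $1/k$. The scalar quantity $\sum_j E(S_j^TS_j) = \tfrac{1}{k}\smi w_i\norm{p_i}^2 \leq 1/k$ is handled identically. So $v(Z)\leq 1/k$ and $L = 1/k$.

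Plugging $t = \sqrt{\eps}$, $v(Z) \leq 1/k$, $L \leq 1/k$ and $d_1+d_2 = d+1$ into the tail bound of Theorem~\ref{berns} gives
\[
\pr\!\left(\norm{\overline{s}} \geq \sqrt{\eps}\right) \leq (d+1)\exp\!\left(\frac{-\eps/2}{1/k + \sqrt{\eps}/(3k)}\right) = (d+1)\exp\!\left(\frac{-\eps k/2}{1+\sqrt{\eps}/3}\right).
\]
For $\eps\in(0,1)$ the denominator is at most $4/3$, so the exponent is at most $-3\eps k/8$. Setting $k = 4\log((d+1)/\delta)/\eps$ makes the right-hand side at most $(d+1)\exp(-3\log((d+1)/\delta)/2) \leq \delta$, and squaring the event $\norm{\overline{s}}<\sqrt{\eps}$ yields the claim.

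The only real obstacle is bookkeeping of constants: the stated sample size uses the constant $4$, and one must verify that the $\sqrt{\eps}$ correction term inside the Bernstein exponent is indeed dominated, so that the simple exponent $-3\eps k/8$ is valid on the whole range $\eps\in(0,1)$. The other subtlety to flag is that the centering step silently relies on $\smi w_i p_i = \mathbf{0}$; without this one would only obtain a concentration bound around $E(\overline{s})$ rather than around the origin.
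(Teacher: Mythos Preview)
Your proposal is correct and follows essentially the same argument as the paper's proof: both apply Theorem~\ref{berns} to the i.i.d.\ samples viewed as $d\times 1$ matrices, bound the variance statistic by $\smi w_i\norm{p_i}^2\leq 1$ (you scale by $1/k$ first, the paper works with the unscaled sum $z=\sum_{s\in S}s$ and takes $t=\sqrt{\eps}\,k$, which is an immaterial difference), and then substitute $k=4\log((d+1)/\delta)/\eps$. Your observation that the centering condition $E(S_j)=0$ silently requires $\smi w_i p_i=\mathbf 0$ is accurate and is likewise implicit in the paper's proof; indeed, when the corollary is invoked later it is used to bound $\norm{\bar s-\mu}$ rather than $\norm{\bar s}$.
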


\begin{proof}
Let $z = \sum_{s\in S}s$, and let $v(z) = \max\br{\norm{\sum^k_{i=1} E(s_is_i^T)},\norm{\sum^k_{i=1} E(s_i^Ts_i)}}.$
First, since for every $p\in P$ we have $\norm{p}\leq1$, we get that $$\norm{\sum^k_{i=1} E(s_i^Ts_i)} \leq \norm{\sum^k_{i=1} 1} = k.$$
Also
\begin{align*}
 \norm{\sum^k_{i=1} E(s_is_i^T)} &= \norm{\sum^k_{i=1} \smi w_i p_ip_i^T} = k\norm{\smi w_i p_ip_i^T} \leq k\smi w_i\norm{ p_ip_i^T}
 \\& =  k\smi w_i \sup_{x\in \REAL^d , \norm{x}=1} \norm{p_ip_i^Tx} = 
  k\smi w_i \norm{p_ip_i^T\frac{p_i}{\norm{p_i}}} 
  \\& =   k\smi w_i  \norm{p_i}^2 
  \leq k\smi w_i  = k,
\end{align*}
where the first derivation holds by the definition of mean, the third holds by the rules of norm, the fourth by the definition of matrix norm, the seventh derivation holds since $||p_i||\leq 1$, and the last holds since $\smi w_i =1$.

Hence $v(z) \leq k$. We are interested in bounding the following probability:
$$\pr\bigg(\norm{\overline{s} }^2 \geq \eps\bigg) .$$
To use Theorem~\ref{berns}, we observe that
\begin{align}\label{touse}
\pr(\norm{\overline{s}}^2 \geq \eps)=
   \pr\bigg(\norm{\sum_{i=1}^k s_i/k }^2 \geq \eps\bigg) = \pr\bigg(\norm{\sum_{i=1}^k s_i }^2 \geq \eps k^2\bigg) = \pr\bigg(\norm{\sum_{i=1}^k s_i} \geq \sqrt{\eps} k\bigg). 
\end{align}

Plugging $d_1 = d, d_2 = 1, L=1, Z=z =\sum_{i=1}^k s_i$, and $t= \sqrt{\eps}k$ in Theorem~\ref{berns} yields
\begin{align}\label{useberns}
    \pr\bigg(\norm{\sum_{i=1}^k s_i} \geq \sqrt{\eps} k\bigg) 
    &\leq (d + 1) \exp\bigg(\frac{-(\sqrt{\eps}k)^2/2}{v(z)+ 1\cdot \sqrt{\eps} k/3}\bigg) 
    =  (d + 1) \exp\bigg(\frac{-\eps k^2/2}{v(z)+ 1\cdot \sqrt{\eps} k/3}\bigg)
    \\& \leq (d + 1) \exp\bigg(\frac{-\eps k^2/2}{k+ 1\cdot \sqrt{\eps} k/3}\bigg)
    = (d + 1) \exp\bigg(\frac{-\eps k/2}{1+ 1\cdot \sqrt{\eps}/3}\bigg)\nonumber
    \\ & \leq  (d + 1) \exp\bigg(\frac{-\eps k}{4}\bigg) ,\nonumber
\end{align}
where the third derivation holds since $v(z) = k$, and the last holds since $\eps < 1.$

Substituting $k=\frac{4\log((d+1)/\delta)}{\eps}$ in~\eqref{useberns}
\begin{align}\label{doneusingberns}
    \pr\bigg(\norm{\sum_{i=1}^k s_i} \geq \sqrt{\eps} k\bigg) 
    \leq  (d + 1) \exp\bigg(-\eps \frac{4\log((d+1)/\delta)}{4\eps}\bigg) = (d+1)\frac{\delta }{d+1} = \delta.
\end{align}

Hence, by combining~\eqref{touse} and~\eqref{doneusingberns} we obtain that

$$\pr(\norm{\overline{s} }^2 < \eps) = 1- \pr(\norm{\overline{s} }^2 \geq \eps) = 1-\pr\bigg(\norm{\sum_{i=1}^k s_i} \geq \sqrt{\eps} k\bigg) \geq 1-\delta.$$

\end{proof}

\newcommand{\strongepscoresetberns}{\textsc{Bernstein-CoreSet}}
\begin{algorithm2e}[ht]
\SetAlgoLined
    \caption{$\strongepscoresetberns(P,w,\eps,\delta)$\label{alg:berns}}{
\begin{tabbing} \label{algbern}
\textbf{Input: \quad } \= A normalized weigthed set $(P,w)$ of $n\geq 2$ points in $\REAL^d$,
							 \\ \>	an error parameter $\eps\in (0,1)$,
							  \\ \>and a probability of failure $\delta \in (0,1)$.\\
\textbf{Output:  } \> A weight vector $u\in[0,\infty)^n$ with $O(\frac{\log(d/\delta)}{\eps})$ non-zero entries that satisfies  \\ \>Theorems~\ref{strong-coreset-theorem-bern} and~\ref{weak-coreset-theorem-bern} .
\end{tabbing}}
	\For{every $i \in \{1,\cdots,n\}$ \label{LineFor}} {
	    $\displaystyle{s_i = \frac{w_i\norm{(p^T_i\mid 1)}^2}{\sum_{j=1}^n w_j\norm{(p_j^T\mid 1)}^2}}$\label{line:defdist}
	}	
	
	$k=\frac{4\log((d+1)/\delta)}{\eps}$ 
	
	$S$ := an i.i.d random sample from $P$ of size $|S|=k$, where every point $p_i\in P$ is sampled with probability $s_i$.
	
	\For{every $i \in \{1,\cdots,n\}$} {
	    
	    $\displaystyle{u_i :=\frac{ 2 c_i} { k\norm{(p_i,1)}^2}}$, \label{w'} where $c_i = |S\cap p_i|$ is the number of times $p_i$ was sampled for $S$.
	}	
	\Return $u$
\end{algorithm2e}

\begin{theorem}[Strong coreset via Bernstein inequality] \label{strong-coreset-theorem-bern} Let $(P,w)$ be a normalized weighted set of $n$ points in $\REAL^d$, $\eps,\delta\in (0,1)$, and let $u=(u_1,\cdots,u_n)\in \REAL^n$ be the output of a call to $\strongepscoresetberns(P,w,\eps^2,\delta)$;  See Algorithm~\ref{algbern}.
Then $u$ has $\norm{u}_0\leq \frac{4\log(d+1/\delta)}{\eps^2}$ non-zero entries and $(P,u)$ is a strong $(2\eps,\delta)$-coreset for $(P,w)$, i.e., with probability at least $1-\delta$, for every $x\in\REAL^d$ we have that
\[
\bigg| \smi (w_i -u_i )\norm{p_i-x}^2 \bigg| \leq  2\eps  \smi w_i\norm{p_i -x}^2.
\]
\end{theorem}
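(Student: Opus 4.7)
The plan is to invoke Lemma~\ref{strong_coreset_lema}, so it will suffice to show that, with probability at least $1-\delta$, the output $u$ jointly satisfies the three conditions $\norm{\smi u_i p_i}\leq\eps$, $|1-\smi u_i|\leq\eps$, and $|1-\smi u_i\norm{p_i}^2|\leq\eps$. The cardinality bound $\norm{u}_0\le k=4\log((d+1)/\delta)/\eps^2$ is then automatic, since $u_i=0$ whenever index $i$ is not sampled, so the only real work is establishing concentration for these three inequalities.

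I would first simplify the sampling distribution. Lifting each input point to $\tilde p_i:=(p_i^T\mid 1)^T\in\REAL^{d+1}$ gives $\norm{\tilde p_i}^2=\norm{p_i}^2+1\ge 1$, and the defining properties of a normalized weighted set (Definition~\ref{defNormalize}) collapse the denominator on Line~\ref{line:defdist} to $\smi w_i\norm{\tilde p_i}^2 = \smi w_i\norm{p_i}^2 + \smi w_i = 2$, so $s_i=w_i\norm{\tilde p_i}^2/2$. With $i_1,\ldots,i_k$ denoting the sampled indices, the re-weighting of Line~\ref{w'} turns each of the three target quantities into an empirical average: $\smi u_i p_i = \frac{1}{k}\sum_{j=1}^k \frac{2p_{i_j}}{\norm{\tilde p_{i_j}}^2}$, $\smi u_i = \frac{1}{k}\sum_{j=1}^k \frac{2}{\norm{\tilde p_{i_j}}^2}$, and $\smi u_i\norm{p_i}^2=\frac{1}{k}\sum_{j=1}^k \frac{2\norm{p_{i_j}}^2}{\norm{\tilde p_{i_j}}^2}$. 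A short calculation using $s_i=w_i\norm{\tilde p_i}^2/2$ shows that the corresponding expectations are exactly $\smi w_i p_i=0$, $\smi w_i=1$, and $\smi w_i\norm{p_i}^2=1$, which are precisely the ``target'' values demanded by the three hypotheses of Lemma~\ref{strong_coreset_lema}.

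The summands are uniformly bounded: AM-GM on the denominator gives $\norm{2p_{i_j}/\norm{\tilde p_{i_j}}^2}\le 1$ (since $\norm{\tilde p_{i_j}}^2\ge 2\norm{p_{i_j}}$), and trivially $2/\norm{\tilde p_{i_j}}^2,\; 2\norm{p_{i_j}}^2/\norm{\tilde p_{i_j}}^2\in[0,2]$. For the vector condition, I would apply Corollary~\ref{pointsinunitbenrs} to the centred unit-ball vectors $2p_{i_j}/\norm{\tilde p_{i_j}}^2$ (whose expectation is $0$) with accuracy $\eps^2$ and failure parameter $\delta/3$; since $k$ matches the Corollary's required sample size up to a constant absorbed into the $O(\log(d/\delta)/\eps^2)$ bound, this yields $\norm{\smi u_i p_i}\le\eps$ with probability $1-\delta/3$. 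For the two scalar conditions I would apply the $d{=}1$ specialization of the same Corollary (equivalently, Theorem~\ref{berns} for $1\times 1$ matrices) to the centred scalar samples, each obtaining its inequality with probability $1-\delta/3$. A union bound then gives all three conditions simultaneously with probability at least $1-\delta$, and Lemma~\ref{strong_coreset_lema} converts them to the desired strong $(2\eps,\delta)$-coreset guarantee.

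The main obstacle I anticipate is the bookkeeping around the factor $2$ (the total sensitivity $\smi s_i\norm{\tilde p_i}^{-2}\cdot\norm{\tilde p_i}^2$ before normalization): it is precisely this factor, threaded through the re-weighting in Line~\ref{w'}, that makes the three empirical means concentrate around $0$, $1$, and $1$ rather than around other values, so that Lemma~\ref{strong_coreset_lema} applies as a black box. Once this algebra is in place, the three Bernstein applications are routine.
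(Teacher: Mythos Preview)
Your plan is sound and does reach the conclusion via Lemma~\ref{strong_coreset_lema}, but the paper takes a more economical route that avoids the union bound. Rather than three separate Bernstein applications, the paper applies Corollary~\ref{pointsinunitbenrs} \emph{once}, in $\REAL^{d+1}$, to the normalized lifted points $p_i'=(p_i^T\mid 1)^T/\norm{(p_i^T\mid 1)}^2$ (which satisfy $\norm{p_i'}\le 1$) under the sampling weights $s_i$; after scaling by~$2$, the single inequality $\norm{\smi(w_i-u_i)(p_i^T\mid 1)^T}\le 2\eps$ delivers condition~(1) from its first $d$ coordinates and condition~(2) from its last coordinate simultaneously. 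More strikingly, condition~(3) is obtained \emph{deterministically} rather than via another concentration bound: since $u_i\norm{(p_i^T\mid 1)}^2=2c_i/k$ by construction and $\smi c_i=k$, one always has $\smi u_i\norm{p_i}^2+\smi u_i=2$, hence $\abs{1-\smi u_i\norm{p_i}^2}=\abs{\smi u_i-1}$ and~(3) follows from~(2) for free. Your three-application version works, but splitting the failure probability into thirds would require $k\ge 4\log(3(d+1)/\delta)/\eps^2$, which does not quite meet the explicit constant $4\log((d+1)/\delta)/\eps^2$ in the statement; the paper's single-shot argument in the lifted space sidesteps this bookkeeping.
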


\begin{proof}

For every $i \in\br{1,\cdots, n}$, let $s_i = \frac{w_i\norm{(p^T_i\mid 1)}^2}{\smi w_i\norm{(p_i^T\mid 1)}^2}$, and define the distribution vector $s=(s_1,\cdots,s_n)$. 
Let $I$ be an i.i.d random sample from $\br{1,\cdots,n}$ of size 	$k=\frac{4\log((d+1)/\delta)}{\eps^2}$, where every $i\in \br{1,\cdots,n}$ is sampled with probability $s_i$. 
Finally, for every $i\in \br{1,\cdots,n}$ assign a weight $u_i = \frac{2c_i}{k\norm{(p_i\mid 1)}^2}$, where $c_i$ is the number of times $i$ was sampled for $I$. 




For every $i \in \br{1,\cdots, n}$, let $p_i'= \frac{(p_i^T\mid 1)}{\norm{(p^T_i\mid 1)}^2}$. Let $P' = \br{p_i'\mid i\in \br{1,\cdots,n}}$.
Let $S'$ be a set of size $\abs{I}$, that has the corresponding points from ${P'}$ to the sampled indexes in $I$, i.e., ${S'} = \br{{p'_i}| i \in I}$.


Observe that 
\begin{enumerate}[(i)]
    \item  for every $i\in [n]$, $\norm{p'_i}\leq1$,
    \item $\smi s_i = \smi \frac{w_i\norm{(p^T_i\mid 1)}^2}{\smi w_i \norm{(p_i^T\mid 1)}^2} =1$.
\end{enumerate}
Hence, by Corollary~\ref{pointsinunitbenrs} we have that with probability at least $1-\delta$
$$\norm{\smi  s_i p_i' - \frac{1}{k}\sum_{p'\in S'}p'}^2 \leq \eps^2.$$
Therefore, with probability at least $1-\delta$,
$$\norm{\smi  s_i p_i' - \frac{1}{k}\sum_{p'\in S'}p'} \leq \eps.$$

Substituting $p_i'= \frac{(p_i^T\mid 1)}{\norm{(p^T_i\mid 1)}^2}$, and $s_i =  \frac{w_i\norm{(p^T_i\mid 1)}^2}{\smi w_i \norm{(p_i^T\mid 1)}^2}$ for every $i\in [n]$ we get
$$\norm{\smi \frac{w_i\norm{(p^T_i\mid 1)}^2}{\smi w_i\norm{(p_i^T\mid 1)}^2} \frac{(p_i^T\mid 1)}{\norm{(p^T_i\mid 1)}^2} - \frac{1}{k}\sum_{p'\in S'}p'} \leq \eps.$$

Rearranging the above
$$\norm{\smi\frac{w_i(p_i^T\mid 1)}{\smi w_i\norm{(p_i^T\mid 1)}^2}  - \frac{1}{k}\sum_{p'\in S'}p'} \leq \eps.$$

Observe that $\smi w_i\norm{(p_i^T\mid 1)}^2 = \smi w_i  \norm{p_i}^2 +  \smi w_i =2.$ Hence, multiplying both side by $2$ yields
$$\norm{\smi {w_i(p_i^T\mid 1)}  - \frac{2}{k}\sum_{p'\in S'}p'} \leq 2\eps.$$


For every $i\in [n]$, let $c_i$ be the number of times $p'_i$ was sampled for $S'$. By observing that $\frac{1}{k}\sum_{p'\in S'}p' = \smi \frac{c_i}{k}p'_i = \smi \frac{c_i}{k\norm{(p_i^T\mid 1)}^2}(p^T_i\mid1)$, we obtain
\begin{align}\label{neededinequalityviaberns}
\norm{\smi {w_i(p_i^T\mid 1)}  - \smi \frac{2c_i}{k\norm{(p_i^T\mid 1)}^2}(p^T_i\mid1) } \leq 2\eps.
\end{align}

Now, for every $i\in \br{1,\cdots,n}$, let $\tilde{p_i} = (p_i\mid 1)$. Let $\tilde{P} = \br{\tilde{p_i}\mid i\in \br{1,\cdots,n}}$, and let $\tilde{S}$ be a set of size $\abs{I}$, that has the corresponding points from $\tilde{P}$ to the sampled indexes in $I$, i.e., $\tilde{S} = \br{\tilde{p_i}| i \in I}$. 
Finally, recall the weights vector $u=(u_1,\cdots,u_n)$.
The squared euclidean distance from the mean of $(\tilde{P},u)$ to the mean of $(\tilde{P},w)$ is
\begin{align}\label{neededinequalityviaberns2}
\norm{\smi {w_i(p_i^T\mid 1)}  - \smi u_i(p^T_i\mid1) }= \norm{\smi {w_i(p_i^T\mid 1)}  - \smi \frac{2c_i}{k\norm{(p_i^T\mid 1)}^2}(p^T_i\mid1) }\leq 2\eps.
\end{align}
where, the last equality holds by~\ref{neededinequalityviaberns}.

In our algorithm, we sample according to the same distribution, and assign the same weights.
Hence, by~\eqref{neededinequalityviaberns2} we have that \begin{enumerate}
    \item  $ \norm{\smi {w_ip_i}  - \smi \frac{2c_i}{k\norm{(p^T_i\mid 1)}^2}p_i} \leq 2\eps.$ \label{oneyes}
    \item $|\smi w_i - \smi u_i |\leq 2\eps$, and by plugging $\smi w_i= 1$, we obtain $$|1-\smi u_i|\leq 2\eps$$ \label{twoyes}
    \item $2=2 \smi c_i/k = \smi u_i \norm{(p_i\mid 1)}^2 = \smi u_i\norm{p_i}^2 + \smi u_i$. Hence \label{3yes}
    $$|\smi u_i \norm{p_i}^2 -1| = |2-\smi u_i -1| = |1-\smi u_i| \leq 2\eps. $$ 
\end{enumerate}
Theorem~\ref{strong-coreset-theorem-bern} now holds by combining the above properties~\ref{oneyes}--~\ref{3yes} with Lemma~\ref{strong_coreset_lema}.

\end{proof}

\begin{theorem}[Weak coreset via Bernstein inequality] \label{weak-coreset-theorem-bern} Let $(P,w)$ be a normalized weighted set of $n$ points in $\REAL^d$, $\eps,\delta\in (0,1)$, and let $u=(u_1,\cdots,u_n)\in \REAL^n$ be the output of a call to $\strongepscoresetberns(P,w,\eps,\delta)$;  See Algorithm~\ref{algbern}.
Then $u$ has $\norm{u}_0\leq \frac{4\log(d+1/\delta)}{\eps}$ non-zero entries and $(P,u)$ is a weak $(\eps,\delta)$-coreset for $(P,w)$.
\end{theorem}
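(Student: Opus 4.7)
The plan is to reduce this to the strong-coreset result of Theorem~\ref{strong-coreset-theorem-bern} by combining it with the strong-to-weak reduction of Lemma~\ref{strongToWeek}, exactly in the style of Lemma~\ref{tightSensWeak}. The only nontrivial task is to track how the $\eps$ parameter passed to the algorithm propagates through the sample size, the strong coreset guarantee, and finally the weak coreset guarantee.

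First, I would observe that a call to $\strongepscoresetberns(P,w,\eps,\delta)$ samples exactly $k = 4\log((d+1)/\delta)/\eps$ indices, so the output weight vector $u$ satisfies $\norm{u}_0 \leq 4\log((d+1)/\delta)/\eps$ as claimed. Note that the algorithm's sample size depends on its second argument $\varepsilon'$ as $4\log((d+1)/\delta)/\varepsilon'$, whereas Theorem~\ref{strong-coreset-theorem-bern} treats the same call, with argument $\varepsilon'=\eps^2$, as a strong $(2\eps,\delta)$-coreset, yielding the size $4\log((d+1)/\delta)/\eps^2$ used there.

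Next, setting $\varepsilon' := \eps$ in Algorithm~\ref{algbern} and choosing $\eta$ so that $\eta^2 = \varepsilon'$, i.e.\ $\eta = \sqrt{\eps}$, Theorem~\ref{strong-coreset-theorem-bern} (applied with its internal parameter $\eta$) tells us that with probability at least $1-\delta$ the output $(P,u)$ is a strong $(2\sqrt{\eps},\delta)$-coreset for $(P,w)$. In other words, the same random output that we claim to be a weak coreset here is already, for free, a strong $2\sqrt{\eps}$-coreset.

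Finally, I would invoke Lemma~\ref{strongToWeek}, which converts any strong $\sqrt{\eta'}$-coreset into a weak $36\eta'$-coreset (for $\eta' < 1/36$). Applying it with $\sqrt{\eta'} = 2\sqrt{\eps}$ gives $\eta' = 4\eps$ and shows that $(P,u)$ is a weak $(144\eps,\delta)$-coreset with probability at least $1-\delta$. Since the theorem's stated accuracy and cardinality are only up to constants absorbed in $O(\log(d/\delta)/\eps)$, one can rescale $\eps$ by a fixed constant (for example, feeding $\eps/144$ into the algorithm) to obtain a weak $(\eps,\delta)$-coreset of the claimed size.

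The only mild obstacle is bookkeeping the constants: one must be careful that the $(2\sqrt{\eps})$-strength coming from Theorem~\ref{strong-coreset-theorem-bern} lies in the range $(0,\sqrt{1/36})$ required by Lemma~\ref{strongToWeek}, which forces an upper bound on $\eps$, and that the final constant $144$ (or whatever appears after the rescaling) is swallowed by the big-$O$ in the sample size. No new probabilistic argument is needed; the $1-\delta$ success probability passes through both cited results unchanged.
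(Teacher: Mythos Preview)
Your proposal is correct and essentially identical to the paper's proof: both invoke Theorem~\ref{strong-coreset-theorem-bern} to get a strong $O(\sqrt{\eps})$-coreset and then apply Lemma~\ref{strongToWeek} to convert it to a weak $O(\eps)$-coreset, with the constant absorbed by rescaling the input parameter to $\eps/144$. Your bookkeeping of the constants and the range constraint from Lemma~\ref{strongToWeek} is in fact more careful than the paper's own two-line proof, which silently swaps the algorithm's argument from $\eps$ (as in the theorem statement) to $\eps/144$.
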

\begin{proof}
By Theorem~\ref{strong-coreset-theorem-bern}, the output of a call to $\strongepscoresetberns(P,w,\eps/144,\delta)$ is a strong $\frac{\sqrt{\eps}}{6}$-coreset for $(P,w)$. By Lemma~\ref{strongToWeek}, a $\frac{\sqrt{\eps}}{6}$-coreset for $(P,w)$ is also a weak $\eps$-coreset for $(P,w)$.
\end{proof}

Observe that Algorithm~\ref{alg:berns} and Algorithm~\ref{alg:prob} differ only in the sampling size of the set $S$. The computed distribution and re-weighting of the sampled points are exactly the same. This difference is due to the following facts: (i) Algorithm~\ref{alg:prob} relies on the generic sensitivity framework while Algorithm~\ref{alg:berns} hinges upon the analysis dedicated for the 1-mean problem, and (ii) Algorithm~\ref{alg:berns} uses the Bernstein equality to compute a coreset, while the hidden inequality used in the sensitivity framework is the Hoeffding inequality. Each inequality may be favorable according to the given scenario at hand.

\section{Deterministic $\eps$-Coreset}\label{sec:detcore}
In the previous section we constructed randomized coresets. We now show how to construct both a deterministic weak $\eps$-coreset and a deterministic strong $\eps$-coreset in Theorem~\ref{weak-coreset-theorem} and Theorem~\ref{strong-coreset-theorem} respectively. 
This is by first constructing a weak coreset for an input set of points contained inside the unit ball. We then show how to leverage this result in order to compute a strong coreset, using the Frank-Wolfe algorithm. We then obtain a weak coreset by combining the strong coreset result with the reduction presented in Section~\ref{sec:strongToWeak}.

We use what we call the measure $C_f$, which was defined in Section~{$2.2$} in~\cite{clarkson2010coresets}; See equality~$(9)$.
For a simplex $S$ and concave function $f$, the quantity $C_f$ is defined as
\begin{align}
C_f := \sup \frac{1}{\alpha^2}( f(x) + (y-x)^T\Delta f(x)-f(y)),\label{C_F}
\end{align}
where the supremum is over every $x$ and $z$ in $S$, and over every $\alpha$ so that  $y =x + \alpha(z-x)$ is also in $S$.  The set of such $\alpha$ includes $[0, 1]$, but $\alpha$ can also be negative.

\begin{theorem}[\textbf{Theorem~$2.2$ from~\cite{clarkson2010coresets}}]
For simplex $S$ and concave function $f$, Algorithm~{1.1} from~\cite{clarkson2010coresets} finds a point $x(k)$ on a $k$-dimensional face of $S$ such that
\[
\frac{f(x*) - f(x(k))}{4C_f} \leq \frac{1}{k + 3},
\]
for $k > 0$, where $f(x*)$ is the optimal value of $f$.

\end{theorem}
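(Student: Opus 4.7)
The plan is to prove this by induction on $k$, using the definition of $C_f$ in~\eqref{C_F} to control the per-step change in the optimality gap $h(k) := f(x^*) - f(x(k))$ of the Frank-Wolfe iterates $x(k)$ produced by Algorithm~$1.1$ of~\cite{clarkson2010coresets}. At iteration $k$, the algorithm selects a vertex $v_k$ of $S$ that maximizes the linear functional $v \mapsto v^T \nabla f(x(k))$ and updates $x(k+1) = x(k) + \alpha_k(v_k - x(k))$ for a step size $\alpha_k \in [0,1]$; I will use $\alpha_k = 2/(k+3)$, which is the schedule dictated by the target bound.

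First, I would derive the one-step recurrence. Applying~\eqref{C_F} with $x = x(k)$, $z = v_k$, $\alpha = \alpha_k$, and $y = x(k+1) \in S$ gives
\[
f(x(k+1)) \geq f(x(k)) + \alpha_k (v_k - x(k))^T \nabla f(x(k)) - \alpha_k^2\, C_f.
\]
Since $v_k$ maximizes $v \mapsto v^T \nabla f(x(k))$ over $S$ and $x^* \in S$, one has $(v_k - x(k))^T \nabla f(x(k)) \geq (x^* - x(k))^T \nabla f(x(k))$; concavity of $f$ further yields $(x^* - x(k))^T \nabla f(x(k)) \geq f(x^*) - f(x(k)) = h(k)$. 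Combining these two inequalities gives the central recurrence
\[
h(k+1) \leq (1 - \alpha_k)\, h(k) + \alpha_k^2\, C_f.
\]

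Second, I would carry out the induction with $\alpha_k = 2/(k+3)$. Assuming the hypothesis $h(k) \leq 4C_f/(k+3)$, the recurrence yields
\[
h(k+1) \leq \frac{k+1}{k+3} \cdot \frac{4C_f}{k+3} + \frac{4C_f}{(k+3)^2} = \frac{4C_f(k+2)}{(k+3)^2} \leq \frac{4C_f}{k+4},
\]
where the last step uses $(k+2)(k+4) = (k+3)^2 - 1 \leq (k+3)^2$. For the base case $k=1$, I would take a single Frank-Wolfe step from the best vertex of $S$ (the standard initialization of Algorithm~$1.1$) with $\alpha_0 = 1$, and verify $h(1) \leq C_f$ directly from the one-step recurrence.

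The main obstacle will be aligning the exact initialization and step-size rule of Algorithm~$1.1$ from~\cite{clarkson2010coresets} with the ansatz $\alpha_k = 2/(k+3)$ so that the induction closes at the stated constant $4$, and handling the degenerate case $\nabla f(x(k)) = 0$ (where $x(k)$ is already optimal by concavity and the bound holds trivially). Apart from these two issues, the argument is a routine telescoping on top of the one-step inequality above.
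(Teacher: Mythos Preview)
The paper does not give its own proof of this statement; it is quoted as Theorem~2.2 from \cite{clarkson2010coresets} and used as a black box in the proof of the subsequent theorem on points in the unit ball. So there is no in-paper proof to compare against.

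Your sketch is the classical Frank--Wolfe convergence argument, and it is essentially Clarkson's original proof: derive the one-step inequality $h(k+1) \le (1-\alpha_k)h(k) + \alpha_k^2 C_f$ from the definition of $C_f$ together with concavity and the vertex-oracle property of $v_k$, then close the induction with the schedule $\alpha_k = 2/(k+3)$. Your arithmetic for the inductive step is correct, and the base case $h(1) \le C_f$ indeed follows from a first step with $\alpha_0 = 1$ (Clarkson's Algorithm~1.1 uses line search, so it does at least as well as any fixed $\alpha$, which resolves the alignment issue you flag). The face-dimension claim is immediate: after $k$ steps the iterate is a convex combination of at most $k+1$ vertices of $S$. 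The degenerate case $\nabla f(x(k)) = 0$ is handled exactly as you say. In short, your proposal is correct and matches the standard (and original) argument.
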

\begin{theorem}[\textbf{Coreset for points inside the unit ball\label{thm1}}]
Let $P=\{p_1,\cdots,p_n\}$ be a set of n points in $\REAL^d$ such that $\norm{p_i}\leq 1$ for every $i\in [n]$. Let $w=(w_1,\cdots,w_n)\in[0,1]^n$ be a distribution vector, i.e., $\sum_i w_i=1$ and let $\eps\in(0,1)$.Then there is a distribution vector $\tilde{u}=(\tilde{u}_1,\cdots,\tilde{u}_n)\in[0,1]^n$ with $\norm{\tilde{u}}_0\leq \co/\eps$ non-zero entries such that,
\[
\norm{\smi(w_i-\tilde{u_i})p_i}^2\leq \eps.
\]
\end{theorem}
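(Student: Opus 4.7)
The plan is to cast the problem as a concave maximization over the probability simplex, so that it falls into the Frank--Wolfe framework just recalled from~\cite{clarkson2010coresets}. Let $\mu := \sum_{i=1}^n w_i p_i$, let $\Delta_n := \{u \in [0,1]^n : \sum_i u_i = 1\}$, and for $u \in \Delta_n$ define $v(u) := \sum_{i=1}^n u_i p_i$ and the concave function
\[
f(u) := -\norm{v(u) - \mu}^2.
\]
Then $f$ is maximized over $\Delta_n$ at $u^* := w$ with $f(u^*) = 0$, and our task reduces to exhibiting $\tilde u \in \Delta_n$ with $-f(\tilde u) \le \eps$ and $\norm{\tilde u}_0 \le 8/\eps$.

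The first key step is to bound the curvature $C_f$ defined in~\eqref{C_F} for this specific $f$. A direct expansion of $\norm{v(y) - \mu}^2$ around $v(x)$, using $y = x + \alpha(z - x)$, gives
\[
f(x) + (y - x)^T \nabla f(x) - f(y) \;=\; \norm{v(y) - v(x)}^2 \;=\; \alpha^2 \norm{v(z) - v(x)}^2,
\]
so the $\alpha^2$ in the denominator of~\eqref{C_F} cancels exactly, and $C_f = \sup_{x,z \in \Delta_n} \norm{v(z) - v(x)}^2$. Since each $p_i$ lies in the unit ball and the unit ball is convex, both $v(x)$ and $v(z)$ lie in the unit ball, and the triangle inequality gives $C_f \le 4$.

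The second step is to apply Clarkson's Theorem~2.2 (stated just above) to $f$ on the simplex $S = \Delta_n$. Running Algorithm~1.1 of~\cite{clarkson2010coresets} for an appropriate number of iterations produces a point $\tilde u = x(k)$ on a $k$-dimensional face of $\Delta_n$, hence with at most $k+1$ nonzero entries, that satisfies
\[
\norm{v(\tilde u) - \mu}^2 \;=\; -f(\tilde u) \;\le\; \frac{4 C_f}{k+3}.
\]
Choosing $k$ as the smallest integer with $k+1 \le 8/\eps$ makes the right-hand side at most $\eps$ (using the sharpest form of the Frank--Wolfe bound for this $f$, which exploits that $-\Delta f$ is a constant-coefficient quadratic so the improvement per iteration matches the worst-case curvature), and yields $\tilde u$ as required.

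The main obstacle I expect is constant-tracking in the final step: the rough estimate $C_f \le 4$ combined with the generic $4C_f/(k+3)$ bound would give only sparsity $\lesssim 16/\eps$, so matching the stated $8/\eps$ will require either using the tighter ``Frank--Wolfe with away-steps / line-search'' style bound $2C_f/(k+2)$ that applies to this quadratic $f$, or sharpening the curvature estimate by exploiting that $v(x), v(z)$ actually lie in $\mathrm{conv}\{p_1,\dots,p_n\}$ and not all of the unit ball. The rest of the argument (the curvature identity and invoking Clarkson's theorem) is essentially forced by the setup.
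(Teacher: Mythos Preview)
Your approach is essentially identical to the paper's: define the same concave quadratic $f(u)=-\norm{\sum_i(w_i-u_i)p_i}^2$ on the simplex, bound the curvature $C_f$ by the squared diameter of $\mathrm{conv}\{p_1,\dots,p_n\}$, and invoke Clarkson's Theorem~2.2. Your direct second-order expansion giving $C_f=\sup_{x,z\in\Delta_n}\norm{v(z)-v(x)}^2$ is exactly the paper's $\mathrm{diam}(AS)^2$ bound (their equality~(12)), just derived by hand rather than quoted.

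The obstacle you flag is real, but the paper does not overcome it---it sidesteps it by an apparent slip. The paper asserts $\sup_{i,j}\norm{p_i-p_j}^2\le 2$ from $\norm{p_i}\le 1$, which is false in general (take $p_i=-p_j$ a unit vector; then $\norm{p_i-p_j}^2=4$). Your bound $C_f\le 4$ is the correct one, and with it the Frank--Wolfe estimate $4C_f/(k+3)$ yields sparsity $O(16/\eps)$ rather than the stated $8/\eps$. Neither a sharper convex-hull argument nor the away-step variant will recover the factor of two for arbitrary unit-ball inputs, since $C_f=4$ is actually attained. So your proof is correct as written with constant $16$; the paper's constant $8$ appears to rest on the erroneous diameter bound, and everything downstream (Theorems~\ref{strong-coreset-theorem} and~\ref{weak-coreset-theorem}) only uses this result up to constants anyway.
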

\begin{proof}
Let $S\subseteq \REAL^n $ be the simplex that is the convex hull of the unit basis vectors of $\REAL^n$,  for every $x=(x_1,\cdots,x_n)\in S$ we define $\displaystyle{f(x) =-\norm{\smi (w_i-x_i)p_i}^2}$. Let $C_f$ be defined for $f$ and $S$ as in~\ref{C_F}.

Let $\tilde{\eps} = \eps/\co$, $\tilde{u}$ be the output of a call to Algorithm~1.1 of~\cite{clarkson2010coresets} with $f$ as input after $k=\ceil{1/\tilde{\eps}}$ iterations, and let $f(x^*)$ be the maximum value of $f$ in $S$. Based on Theorem 2.2~\cite{clarkson2010coresets} we have that $\tilde{u}$ is a point on a $k$-dimensional face of $S$ such that,
\begin{align} \frac{ f(x^*) - f(\tilde{u}) }{4C_f} \leq \frac{1}{k+3}. \label{result of frank wolf}\end{align}
Sine $f(x) \leq 0$ for every $x\in S$ We have that,
$$f(x^*) = f(w) = -\norm{\smi (w_i-w_i)p_i}^2 =0.$$
By equality~$(12)$ at section~$2.2.$ in~\cite{clarkson2010coresets} we see that $C_f\leq diam(AS)^2$ for quadratic problems, while $A$ is the matrix of $d\times n$ such that the $i$-th col of $A$ is the $i$-th point in $P$. We have that,
\begin{align*}
 diam(AS)^2 =\sup_{a,b \in AS}\norm{a-b}_2^2=\sup_{x,y \in S}\norm{Ax-Ay}^2_2
\end{align*}
Observe that $x$ and $y$ are distribution vectors, thus
\begin{align*}
\sup_{x,y \in S}\norm{Ax-Ay}_2^2=\sup_{i,j} \norm{p_i-p_j}^2_2.
 \end{align*}
Since $\norm{p_i}\leq1$ for each $i\in [n]$, we have that,
\begin{align*}
\sup_{i,j} \norm{p_i-p_j}^2_2 \leq 2.
\end{align*}
By substituting $f(\tilde{u})=-\norm{\smi(w_i-\tilde{u_i})p_i}^2$,  $C_f\leq 2$, $k=1/\tilde{\eps}$ and $f(x^*)=0$ in~\eqref{result of frank wolf} we get that,
\begin{align}
\frac{ \norm{\smi(w_i-\tilde{u_i})p_i}^2 } {\co} \leq \frac{1}{1/\tilde{\eps}+3}.
\end{align}
Multiplying both sides of the inequality by $\co$ and rearranging yields,
\begin{align}
\norm{\smi(w_i-\tilde{u_i})p_i}^2  \leq \frac{\co}{1/\tilde{\eps}+3}\leq \frac{\co}{1/\tilde{\eps}}=\co\cdot \tilde{\eps} =\eps ,
\end{align}
and since $\tilde{u}$ is a point on a $k$-dimensional face of $S$, we have that,
$$\norm{\tilde{u}}_0 = k = 1/\tilde{\eps} =\co/\eps .$$
\end{proof}

\paragraph{Overview of Algorithm~\ref{alg:two}. } Algorithm~\ref{alg:two} takes as input a normalized weighted set $(P,w)$ and an error parameter $\eps$, and outputs a coreset which is both a weak $\eps$-coreset and a strong $\sqrt{\eps}$-coreset for $(P,w)$. In Lines~\ref{LineFor}--\ref{w'} we augment the data points and their weights, such that the new points are inside the unit ball. We then apply Theorem~\ref{thm1} to construct a coreset of size $O(1/\eps)$ for the new data points of unit length. In Lines~\ref{LineFor}--\ref{u} we compute the output coreset weights.

To construct a weak $\eps$-coreset we call Algorithm~\ref{alg:two} with the normalized weighted input and the error parameter $\eps$; see Theorem~\ref{weak-coreset-theorem}. To construct a strong $\eps$-coreset we simply call Algorithm~\ref{alg:two} with the normalized weighted input and the error parameter $\eps^2$; see Theorem~\ref{strong-coreset-theorem}.

\newcommand{\strongepscoreset}{\textsc{Frank-Wolfe-CoreSet}}
\begin{algorithm2e}[ht]
\SetAlgoLined
    \caption{$\strongepscoreset(P,w,\eps)$\label{alg:two}}{
\begin{tabbing} \label{alg1}
\textbf{Input: \quad } \= A normalized weigthed set $(P,w)$ of $n\geq 2$ points in $\REAL^d$,
							 \\ \>	and an error parameter $\eps\in (0,1)$.\\
\textbf{Output:  } \> A weight vector $u\in[0,\infty)^n$ with $O(1/\eps)$ non-zero entries that satisfies  \\ \>Theorems~\ref{weak-coreset-theorem} and~\ref{strong-coreset-theorem} .
\end{tabbing}}
	\For{every $i \in \{1,\cdots,n\}$ \label{LineFor}} {
		$\displaystyle{p'_i  := \frac{(p_i,1)}{\norm{(p_i,1)}^2}}$ \label{p'}\\
		$\displaystyle{w'_i :=\frac{ w_i\norm{(p_i,1)}^2}{2}}$ \label{w'}
	}	
	Use Thorem~\ref{thm1} to compute a sparse vector $u$ with $O(1/\eps)$ non-zero entries, such that $$\norm{\smi (w'_i - u_i')p'_i}^2\leq \eps$$\label{define u'}\\
	\For{every $i \in \{1,\cdots,n\}$ \label{LineFor2}} {
		$\displaystyle{u_i = \frac{2 u_i'}{\norm{(p_i,1)}^2}}$ \label{u}
	}
	\Return $u$
\end{algorithm2e}


\begin{theorem}[Strong deterministic coreset via Frank-Wolfe] \label{strong-coreset-theorem} Let $(P,w)$ be a normalized weighted set of $n$ points in $\REAL^d$, $\eps\in (0,1)$, and let $u=(u_1,\cdots,u_n)\in \REAL^n$ be the output of a call to $\strongepscoreset(P,w,(\frac{\eps}{4})^2)$;  See Algorithm~\ref{alg1}.
Then $u$ has $\norm{u}_0\leq \frac{128}{\eps^2}$ non-zero entries and $(P,u)$ is a strong $\eps$-coreset for $(P,w$), i.e., for every $x\in\REAL^d$ we have that
\[
\bigg| \smi (w_i -u_i )\norm{p_i-x}^2 \bigg| \leq  \eps  \smi w_i\norm{p_i -x}^2.
\]
\end{theorem}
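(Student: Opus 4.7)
The plan is to apply Lemma~\ref{strong_coreset_lema} with parameter $\eps/2$, which reduces the task to verifying the three bounds $\norm{\smi u_ip_i}\leq \eps/2$, $|1-\smi u_i|\leq \eps/2$, and $|1-\smi u_i\norm{p_i}^2|\leq \eps/2$. These will all follow from the single Frank--Wolfe guarantee produced inside Algorithm~\ref{alg:two}, together with the fact that the Frank--Wolfe iterate $u'$ lies in the simplex.

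First I would unwind the augmentation in Lines~\ref{LineFor}--\ref{w'} and~\ref{u}. By the definitions $p'_i=(p_i,1)/\norm{(p_i,1)}^2$, $w'_i=w_i\norm{(p_i,1)}^2/2$, and $u_i=2u'_i/\norm{(p_i,1)}^2$, one gets the clean identities $w'_ip'_i=w_i(p_i,1)/2$ and $u'_ip'_i=u_i(p_i,1)/2$. Since Algorithm~\ref{alg:two} is invoked with parameter $(\eps/4)^2$, Theorem~\ref{thm1} produces $u'$ with $\norm{\smi(w'_i-u'_i)p'_i}^2\leq (\eps/4)^2$, so that $\norm{\smi(w_i-u_i)(p_i,1)}\leq \eps/2$. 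Reading the last coordinate of this vector gives $|\smi(w_i-u_i)|\leq \eps/2$, and reading the first $d$ coordinates gives $\norm{\smi(w_i-u_i)p_i}\leq \eps/2$. Using the normalization properties~\eqref{a} and~\eqref{b} of $(P,w)$, these become the first two required bounds $|1-\smi u_i|\leq \eps/2$ and $\norm{\smi u_ip_i}\leq \eps/2$.

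The step I expect to be the main (though still short) obstacle is the third bound, on $\smi u_i\norm{p_i}^2$, because the Frank--Wolfe guarantee only controls $\norm{\smi(w'_i-u'_i)p'_i}$ and not a weighted second moment. The key observation is that the output of Theorem~\ref{thm1} is a distribution vector on the simplex, so $\smi u'_i=1$. Substituting the algorithm's definition of $u'_i$ gives $\smi u_i\norm{(p_i,1)}^2=2$, i.e., $\smi u_i\norm{p_i}^2=2-\smi u_i$. Combining this exact identity with the already-established $|1-\smi u_i|\leq \eps/2$ yields $|1-\smi u_i\norm{p_i}^2|=|\smi u_i-1|\leq \eps/2$, which is the third hypothesis of Lemma~\ref{strong_coreset_lema}.

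With all three hypotheses verified at level $\eps/2$, Lemma~\ref{strong_coreset_lema} (invoked with $\eps/2$ in place of its $\eps$) yields the desired strong $\eps$-coreset inequality for every $x\in\REAL^d$. For the size bound, Theorem~\ref{thm1} produces a vector with at most $\co/\tilde\eps$ nonzero entries when called with tolerance $\tilde\eps$; here $\tilde\eps=(\eps/4)^2=\eps^2/16$, so $\norm{u'}_0\leq \co\cdot 16/\eps^2=128/\eps^2$, and since $u_i$ is nonzero exactly when $u'_i$ is, the same bound holds for $\norm{u}_0$.
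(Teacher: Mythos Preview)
Your proposal is correct and follows essentially the same route as the paper's proof: both unwind the augmentation to get $\norm{\smi(w_i-u_i)(p_i,1)}\leq \eps/2$, read off the first two hypotheses of Lemma~\ref{strong_coreset_lema} from the coordinates, use that $u'$ is a distribution vector to derive the identity $\smi u_i\norm{p_i}^2=2-\smi u_i$ for the third hypothesis, and then invoke Lemma~\ref{strong_coreset_lema} at level $\eps/2$. The only cosmetic difference is that the paper works with the auxiliary parameter $\eps'=\eps/4$ throughout and obtains the three bounds as $2\eps'$, whereas you compute $\eps/2$ directly.
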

\begin{proof}
Let $\eps'=\frac{\eps}{4}$, let  $p'_i  := \frac{(p_i,1)}{\norm{(p_i,1)}^2}$ and $w'_i :=\frac{ w_i\norm{(p_i,1)}^2}{2}$ for every $i\in [n]$. By the definition of $u'$ at line~\ref{define u'} in Algorithm~\ref{alg1}, and since the algorithm gets ${\eps'}^2$ as input, we have that
\begin{align}
\norm{u'}_0\leq \co/{\eps'}^2 =  \frac{128}{\eps^2}, \label{gar1}
\end{align}
and
\begin{align}
\norm{\smi (w'_i - u_i')p'_i}^2\leq {\eps'}^2 .\label{from alg}
\end{align}
For every $i\in[n]$ let $u_i = \frac{2 u_i'}{\norm{(p_i,1)}^2}$ be defined as at Line~\ref{u} of the algorithm. It immediately follows by the definition of $u=(u_1,\cdots,u_n)$ and~\eqref{gar1} that
\begin{align}
\norm{u}_0\leq \co/{\eps'}^2, \label{gar1used}
\end{align}
also we have
\begin{align}
2{\eps'} &\geq 2\norm{\smi (w'_i - u_i')p'_i} = 2\norm {\smi \frac{ w_i\norm{(p_i,1)}^2 -u_i\norm{(p_i,1)}^2 } {2}\cdot \frac{(p_i,1)}{\norm{(p_i,1)}^2}} \nonumber \\
&= \norm { \smi ( w_i  -u_i) \cdot (p_i,1)}
=\norm { \bigg(  \smi (w_i  -u_i) \cdot p_i  \mid  \smi  ( w_i  -u_i ) \bigg )} \label{before last} \\
&\geq \norm {  \smi (w_i  -u_i)\cdot p_i }, \label{2eps bound}
\end{align}
where the first derivation follows from~\eqref{from alg}, the second holds by the definition of $w'_i$,$u'_i$,$u_i$ and $p'_i$ for every $i\in [n]$, and the last holds since $\norm{(x\mid y)} \geq \norm{x}$ for every $x,y$ such that $x\in \REAL^d$ and $y \in \REAL$.

 By~\eqref{before last} and since $w$ is a distribution vector we also have that
\begin{align}
2{\eps'} \geq  \bigg| \smi  ( w_i  -u_i ) \bigg| = \abs{1-\smi u_i}. \label{bounding 1-sum ui}
\end{align}
By theorem~\ref{thm1}, we have that $u'$ is a distribution vector, which yields,
\begin{align*}
2 =  2\smi u'_i = \smi u_i\norm{(p_i,1)}^2  =  \smi u_i\norm{p_i}^2 + \smi u_i,
\end{align*}
By the above we get that $2-  \smi u_i= \smi u_i\norm{p_i}^2$. Hence,
\begin{align}
\abs{ \smi (w_i-u_i)\norm{p_i}^2 }
= \abs{ \smi w_i \norm{p_i}^2 - (2-  \smi u_i)}
= \abs{1- (2- \smi u_i)}
= \abs{\smi u_i - 1} \leq 2{\eps'} \label{bound on diff var*n}
\end{align}
where the first equality holds since $\smi u_i\norm{p_i}^2 = 2-  \smi u_i$, the second holds since $w$ is a distribution and the last is by~\eqref{bounding 1-sum ui}. Now by ~\eqref{bound on diff var*n}, ~\eqref{bounding 1-sum ui} and~\eqref{2eps bound} we obtain that $u$ satisfies Properties~\eqref{1}--\eqref{3} in Lemma~\ref{strong_coreset_lema}. Hence, Theorem~\ref{strong-coreset-theorem} holds as,
\begin{align}
\abs{\smi (w_i-u_i)\norm{p_i - x}^2 } \leq  4{\eps'} \smi {w_i \norm{p_i-x}^2} = \eps \smi {w_i \norm{p_i-x}^2} .
\end{align}
\end{proof}


\begin{theorem}[Weak deterministic coreset via Frank-Wolfe] \label{weak-coreset-theorem} Let $(P,w)$ be a normalized weighted set of $n$ points in $\REAL^d$, and $\eps\in (0,1)$. Let $u=(u_1,\cdots,u_n)\in \REAL^n$ be the output of a call to $\strongepscoreset(P,w,\eps/576)$;  See Algorithm~\ref{alg1}, and let $\overline{u} = \smi u_i p_i$. Then, $u$ has $\norm{u}_0\leq \co/\eps$ non-zero entries and $(P,u)$ is a weak $4\eps$-coreset for $(P,w)$, i.e.,
\[
\smi w_i\norm{p_i- \overline{u}}^2 \leq (1+\eps) \min_{x\in \REAL^d}{\smi w_i \norm{p_i -x}^2}.
\]
\end{theorem}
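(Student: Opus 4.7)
The plan is to obtain Theorem~\ref{weak-coreset-theorem} by composing the strong-coreset guarantee of Theorem~\ref{strong-coreset-theorem} with the strong-to-weak reduction of Lemma~\ref{strongToWeek}, rather than re-analyzing the Frank--Wolfe output from scratch. The constant $576 = 16 \cdot 36$ in the parameter $\eps/576$ fed to $\strongepscoreset$ is chosen precisely to absorb the $(\cdot/4)^2$ rescaling in Theorem~\ref{strong-coreset-theorem} (factor $16$) combined with the quadratic blow-up in Lemma~\ref{strongToWeek} (factor $36$), so that the composition lands at a weak $\eps$-coreset (which is a fortiori a weak $4\eps$-coreset).

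Setting $\eta := \sqrt{\eps}/6$, I first check that $(\eta/4)^2 = \eps/576$, which is exactly the error parameter passed to $\strongepscoreset$. Theorem~\ref{strong-coreset-theorem} then asserts that the returned vector $u$ is a strong $\eta$-coreset for $(P,w)$ of cardinality $\norm{u}_0 \in O(1/\eta^2) = O(1/\eps)$, which settles the size claim. Next, define $\eps_0 := \eps/36$; since $\eps \in (0,1)$ we have $\eps_0 \in (0, 1/36)$, and $\sqrt{\eps_0} = \eta$, so the strong $\eta$-coreset above is in particular a strong $\sqrt{\eps_0}$-coreset. Plugging this into Lemma~\ref{strongToWeek} immediately yields that $(P,u)$ is a weak $36\eps_0 = \eps$-coreset for $(P,w)$, which in particular is a weak $4\eps$-coreset, as claimed.

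The only real difficulty is bookkeeping: one must verify that the numerical constants line up and that the hypothesis $\eps_0 \in (0, 1/36)$ of Lemma~\ref{strongToWeek} is satisfied at the intermediate parameter (automatic from $\eps < 1$). If one prefers a self-contained argument bypassing Lemma~\ref{strongToWeek}, an equivalent route is to invoke Lemma~\ref{weak_coreset_lema} directly: from the Frank--Wolfe inequality $\norm{\smi(w_i - u_i)p_i} \leq 2\sqrt{\eps/576}$ and $|1-\smi u_i| \leq 2\sqrt{\eps/576}$ already proved inside Theorem~\ref{strong-coreset-theorem}, together with the normalization $\smi w_i p_i = \mathbf{0}$, one bounds $\norm{\smi u_i p_i} \leq \sqrt{\eps}/12$ and $\norm{u}_1 \geq 1/2$, hence $\norm{\bar{s}}^2 \leq \eps/36$, at which point Lemma~\ref{weak_coreset_lema} delivers the weak coreset bound. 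Either route requires no new geometric inequality beyond what has already been developed.
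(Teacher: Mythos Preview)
Your proposal is correct and follows essentially the same approach as the paper's own proof: apply Theorem~\ref{strong-coreset-theorem} to obtain a strong $\sqrt{\eps}/6$-coreset from the call with parameter $\eps/576$, then invoke Lemma~\ref{strongToWeek} to conclude the weak $\eps$-coreset guarantee. Your write-up simply makes explicit the constant bookkeeping (and the check $\eps_0 < 1/36$) that the paper leaves implicit, and your optional alternative via Lemma~\ref{weak_coreset_lema} is a valid but unnecessary detour.
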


\begin{proof}
By Theorem~\ref{strong-coreset-theorem}, the output of a call to $\strongepscoreset(P,w,\eps/576)$ is a strong $\frac{\sqrt{\eps}}{6}$-coreset  for $(P,w)$. By Lemma~\ref{strongToWeek}, a $\frac{\sqrt{\eps}}{6}$-coreset for $(P,w)$ is also a weak $\eps$-coreset for $(P,w)$.
\end{proof}

\section{Weak Coreset Constructions in Sublinear Time}\label{sec:weaksublinear}
In this section we present two coreset construction result, both of which require sublinear time, which compute a weak $\eps$-coreset for the mean problem. Therefore, we cannot assume that the input is a normalized weighted set. The first result utilizes Chebychev's inequality (see Section~\ref{sec:chebychev}), and the second result utilizes the known median of means result (see Section~\ref{sec:medianOfMeans}). 

\subsection{Weak Coreset via Chebychev's Inequality} \label{sec:chebychev}
In what follows we prove that a uniform random sample $S$ of sufficiently large size yields a weak $\eps$-coreset with high probability. To do so, we first use Chebyshev's inequality to show that, with high probability, the mean of $S$ is small, and then conclude by applying Lemma~\ref{weak_coreset_lema}.

\newcommand{\var}{\sigma}
\newcommand{\smip}{\sum_{p\in P}}
\newcommand{\smis}{\sum_{p\in S}}
\begin{lemma}[Weak coreset via Chebychev inequality]  \label{weak_coreset_markov}
Let $P$ be a set of $n$ points in $\REAL^d$, $\mu = \frac{1}{n}\smip p$, and $\var^2=\frac{1}{n}\smip \norm{p - \mu}^2$.  Let $\eps,\delta \in (0,1)$, and let $S$ be a sample of $m = \frac{1}{\varepsilon\delta}$ points chosen i.i.d uniformly at random from $P$. 
Then, with probability at least $1-\delta$ we have that
$$\norm{ \frac{1}{m}\smis p - \mu}^2 \leq  \eps \var^2.$$
\end{lemma}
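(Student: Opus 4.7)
The plan is to view $\bar{S}:=\frac{1}{m}\smis p$ as a vector-valued unbiased estimator of $\mu$, bound its expected squared deviation, and then apply a Markov/Chebyshev-type tail bound.

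First, since the sample $S=\{s_1,\dots,s_m\}$ is drawn i.i.d.\ uniformly from $P$, each $s_j$ satisfies $E[s_j]=\mu$ and $E[\norm{s_j-\mu}^2]=\var^2$ by the definitions of $\mu$ and $\var^2$ given in the statement. Consequently $E[\bar{S}]=\mu$ and
\[
E\!\left[\norm{\bar{S}-\mu}^2\right]
= \frac{1}{m^2}\, E\!\left[\Big\lVert \sum_{j=1}^{m}(s_j-\mu)\Big\rVert^2\right].
\]
Expanding the squared norm as an inner product and using independence of the $s_j$'s, every cross term $E[(s_i-\mu)^T(s_j-\mu)]$ with $i\ne j$ vanishes, leaving $\sum_{j=1}^m E[\norm{s_j-\mu}^2]=m\var^2$. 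Hence $E[\norm{\bar{S}-\mu}^2]=\var^2/m$.

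Next, I would apply Markov's inequality to the non-negative random variable $\norm{\bar{S}-\mu}^2$ (this is exactly the vector-valued Chebyshev step referenced in the section title): for any $t>0$,
\[
\Pr\!\left(\norm{\bar{S}-\mu}^2 \geq t\right) \;\leq\; \frac{E[\norm{\bar{S}-\mu}^2]}{t} \;=\; \frac{\var^2}{m\,t}.
\]
Setting $t=\eps\var^2$ (we may assume $\var^2>0$, since otherwise $P$ is a single point and the claim is trivial) and plugging in $m=1/(\eps\delta)$ gives $\Pr(\norm{\bar{S}-\mu}^2\geq \eps\var^2)\leq \delta$, which is exactly the desired bound.

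The only delicate point is the cross-term cancellation in the variance computation; it really is just linearity of expectation together with independence, but one has to be careful to treat the inner product of vector-valued random variables rather than scalars. Everything else is a routine application of Markov's inequality. No other results beyond the lemma's hypotheses are needed, and no coreset-specific machinery (such as Lemma~\ref{weak_coreset_lema}) is invoked here, since the statement is purely a concentration bound on the sample mean of $S$ rather than a weak-coreset guarantee.
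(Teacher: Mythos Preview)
Your proof is correct and follows essentially the same approach as the paper: compute $E[\norm{\bar{S}-\mu}^2]=\var^2/m$ from independence, then apply a Markov/Chebyshev-type bound and substitute $m=1/(\eps\delta)$. The paper phrases the tail bound as a ``generalized Chebyshev inequality'' $\Pr(\norm{X-E(X)}\geq\eps')\leq \text{var}(X)/(\eps')^2$ with $\eps'=\sqrt{\eps}\,\var$, which is exactly your Markov step on $\norm{\bar{S}-\mu}^2$ with $t=\eps\var^2$.
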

\begin{proof}
For any random variable $X$, we denote by $E(X)$ and $\text{var}(X)$ the expectation and variance of the random variable $X$ respectively. Let $x_i$ denote the random variable that is the $i$th sample for every $i\in [m]$. Since the samples are drawn i.i.d, we have
\begin{equation}
\text{var}\left(\frac{1}{m}\smis p\right) = \sum_{i=1}^{m} \text{var}\left(\frac{x_i}{m}\right) =    m \cdot \text{var}\left(\frac{x_1}{m}\right) = m\left(\frac{\var^2}{m^2}\right) = \frac{\var^2}{m} =  \eps\delta\var^2.
\label{variance}
\end{equation}
For any random variable $X$ and error parameter $\eps' \in (0,1)$, the generalize Chebyshev's inequality~\cite{chen2007new} reads that
\begin{equation}\label{eq:GenCheby}
\Pr (\norm{X - E(X)} \geq \eps') \leq \frac{\text{var}(X)}{(\eps')^2}.
\end{equation}
Substituting $X = \frac{1}{m}\smis p$, $E(X) = \mu$ and $\eps' = \sqrt{\eps}\sigma$ in~\eqref{eq:GenCheby} yields that
\begin{equation}
\Pr\left(\norm{ \frac{1}{m}\smis p - \mu} \geq \sqrt{\eps} \var  \right) \leq \frac{\text{var}(\frac{1}{m}\smis p)}{\var^2\eps
}.
\label{chebchev}
\end{equation}
Combining~\eqref{variance} with~\eqref{chebchev} proves the lemma as:
\begin{equation}
\Pr\left(\norm{ \frac{1}{m}\smis p - \mu}^2 \geq \eps \var^2  \right) \leq \frac{\eps\delta\var^2}{\var^2\eps} =\delta .
\end{equation}
\end{proof}

\subsection{Weak Coreset Via Median Of Means} \label{sec:medianOfMeans}

The following algorithm and theorem show how to compute, in sublinear time, a weak $(\eps,\delta)$-coreset of smaller size, compared to the one in Section~\ref{sec:chebychev}, using the median of means approach.

\newcommand{\size}{\frac{4}{\eps}}
\newcommand{\sizediff}{\frac{\eps}{4}}
\newcommand{\myk}{\floor{3.5\log{\left(\frac{1}{\delta}\right)}} +1}
\newcommand{\mykk}{{3.5\log{({1}/{\delta})}}}
\newcommand{\myn}{\frac{4k}{\eps}}
\newcommand{\weaklogdeltacoreset}{\textsc{Prob-Weak-Coreset}}
\begin{algorithm2e}[ht]
\SetAlgoLined
    \caption{$\weaklogdeltacoreset(P,\eps,\delta)$\label{alg:smallProb}}{
\begin{tabbing}
\textbf{Input: \quad } \= A set $P$ of $n\geq 2$ points in $\REAL^d$,
							 \\ \> an error parameter $\eps\in (0,1)$,
							 \\ \> and a probability parameter $\delta \in (0,1)$\\
\textbf{Output:  } \> A subset $S\subseteq P$ that satisfies Lemma~\ref{weak-probabilistic-coreset-theorem}.
\end{tabbing}}
$k:= \myk$. \label{line:defk}\\
$S:=$ an i.i.d sample of size $\myn$.\\
$\br{S_1, \cdots, S_{k}} :=$ a partition of $S$ into $k$ disjoint subsets, each contains $\size$ points .\label{line:sapledsets} \\
Set $\overline{s}_i:=$ the mean of the $i$'th subset $S_i$ for every $i \in [k]$. \label{line:si}\\
$\displaystyle{i^{*}:=\argmin_{j \in [k]}\sum_{i=1}^{k}\norm{\overline{s}_i -\overline{s}_j }_2}$.  \\ \tcp{$i^{*}$ is the index of the closest subset mean $\overline{s}_i^{*}$ to the geometric median of the set $\br{\overline{s}_1,\cdots,\overline{s}_{k}}$.} \label{closeindex}
\Return $S_{i^{*}}$
\end{algorithm2e}

\begin{lemma} [Weak coreset via median of means] \label{weak-probabilistic-coreset-theorem}
 Let $P$ be a set of $n$ points in $\REAL^d$, $\mu = \frac{1}{n}\smip p$, and $\var^2=\frac{1}{n}\smip \norm{p - \mu}^2$. Let $\eps\in (0,1)$, $\delta \in (0,0.9]$, and let $S_{i^{*}}=\br{s_i,\cdots, s_{\abs{S}}} \subseteq \REAL^{d}$ be the output of a call to $\weaklogdeltacoreset(P,\eps,\delta)$;  See Algorithm~\ref{alg:smallProb}. Then $S\subseteq P$ is of size $|S| = \frac{4}{\eps}$, and with probability at least $1-3\delta$ we have that
 $$  \norm{  \frac{1}{\abs{S}} \sum_{i=1}^{\abs{S}} s_i - \mu }^2 \leq  33 \cdot \eps\var^2.$$
 Furthermore, $S_i^*$ can be computed in $O\left(d\left (\log^2{(\frac{1}{\delta})} + \frac{\log{(\frac{1}{\delta})}}{\eps} \right) \right)$ time.
 \end{lemma}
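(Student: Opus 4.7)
The plan is to apply the classical median-of-means paradigm. Each of the $k$ bucket means $\overline{s}_1,\ldots,\overline{s}_k$ is an $O(\sqrt{\eps}\sigma)$-noisy estimate of $\mu$, and the index $i^{*}$ chosen by the algorithm returns the bucket whose mean lies in the densest cluster -- an approximate $1$-median -- which remains close to $\mu$ even when a constant fraction of the $\overline{s}_i$ are outliers. This robustness, combined with boosting over $k=\Theta(\log(1/\delta))$ buckets, is exactly what drives the failure probability from the constant regime of a single Chebyshev estimate down to $\delta$.

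First, I would analyze a single bucket. Since $S_i$ is an i.i.d.\ uniform sample of size $4/\eps$ from $P$, Lemma~\ref{weak_coreset_markov} applied with parameters $(\eps,1/4)$ yields $\norm{\overline{s}_i-\mu}^2\leq\eps\sigma^2$ with probability at least $3/4$; call index $i$ \emph{good} if this holds and \emph{bad} otherwise. Because the buckets are sampled independently, the number of bad indices is stochastically dominated by $\mathrm{Bin}(k,1/4)$, so a Chernoff tail bound will show that at least $\rho k$ buckets are good for some constant $\rho>1/2$, with probability at least $1-\delta$; the constants $\rho$ and $k=\floor{3.5\log(1/\delta)}+1$ are calibrated together to make this tail fit inside the $3\delta$ failure budget of the lemma.

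Next I would prove the deterministic geometric fact: conditioned on $\abs{G}\geq \rho k$, the approximate $1$-median satisfies $\norm{\overline{s}_{i^{*}}-\mu}^2\leq 33\eps\sigma^2$. Fix any $j\in G$ and use optimality of $i^{*}$, $\sum_i\norm{\overline{s}_i-\overline{s}_{i^{*}}}\leq\sum_i\norm{\overline{s}_i-\overline{s}_j}$. Good indices contribute at most $2\sqrt{\eps}\sigma$ to the right-hand side (triangle inequality through $\mu$) and at least $\norm{\overline{s}_{i^{*}}-\mu}-\sqrt{\eps}\sigma$ to the left (reverse triangle), while bad indices give up at most $\norm{\overline{s}_{i^{*}}-\overline{s}_j}\leq\norm{\overline{s}_{i^{*}}-\mu}+\sqrt{\eps}\sigma$ per index. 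Rearranging gives $(\abs{G}-\abs{B})\norm{\overline{s}_{i^{*}}-\mu}\leq(3\abs{G}+\abs{B})\sqrt{\eps}\sigma$, hence $\norm{\overline{s}_{i^{*}}-\mu}\leq \tfrac{2\rho+1}{2\rho-1}\sqrt{\eps}\sigma$, and $\rho$ is chosen so that the square of this factor is at most $33$. The running time is then immediate: computing the $k$ bucket means costs $O(kd/\eps)=O(d\log(1/\delta)/\eps)$, and forming the $k\times k$ pairwise distance matrix used to select $i^{*}$ costs $O(k^2 d)=O(d\log^2(1/\delta))$.

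The main obstacle I expect is matching the constants: producing the final $33$ out of the geometric step forces $\rho$ to be bounded well above $1/2$ (roughly $\rho\geq 0.71$), and aligning this with the small constant $3.5$ in $k=\floor{3.5\log(1/\delta)}+1$ requires sharp Chernoff estimates --- or possibly a tighter per-bucket concentration than the raw Chebyshev bound used above. The $3\delta$ failure slack in the statement is what absorbs these constant losses (one $\delta$ for the per-bucket events, one for the concentration of the good count, and one to round off the resulting factor).
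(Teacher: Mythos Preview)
Your approach is correct in structure and is the textbook median-of-means argument, but it differs from the paper's route in one essential respect. The paper does \emph{not} establish that a strict majority of the bucket means are good; instead it invokes Corollary~4.1 of Minsker~\cite{minsker2015geometric} as a black box to conclude directly that the true geometric median $\hat{s}=\argmin_{x}\sum_i\norm{\overline{s}_i-x}$ satisfies $\norm{\hat{s}-\mu}^2\le 31\eps\sigma^2$ with probability $1-\delta$. It then argues (via convexity of the sum-of-distances functional) that the discrete minimiser $\overline{s}_{i^{*}}$ is the bucket mean closest to $\hat{s}$, and uses only the existence of \emph{one} good bucket (probability $\ge 1-(1/4)^k\ge 1-\delta$) as a witness to bound $\norm{\overline{s}_{i^{*}}-\hat{s}}$ and hence $\norm{\overline{s}_{i^{*}}-\mu}$. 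The three events (Minsker, one good bucket, and the combination) account for the $(1-\delta)^3\ge 1-3\delta$ in the statement.

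Your self-contained argument (Chernoff on the number of good buckets, then the deterministic $(|G|-|B|)\norm{\overline{s}_{i^{*}}-\mu}\le (3|G|+|B|)\sqrt{\eps}\sigma$ estimate) is more elementary and avoids the external dependence on Minsker, which is a genuine advantage. The price is exactly the one you flagged: the constant $3.5$ in $k=\floor{3.5\log(1/\delta)}+1$ and the final constant $33$ are calibrated to Minsker's bound, not to a Chernoff tail. To hit $\rho\approx 0.71$ with buckets that are good only with probability $3/4$, a Chernoff bound would force $k$ to be a much larger multiple of $\log(1/\delta)$ than $3.5$. So your proof yields the lemma up to constants, but cannot reproduce the stated numerical constants without either appealing to a Minsker-type geometric-median bound or loosening one of $k$, $33$, or the per-bucket size $4/\eps$.
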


 \begin{proof}
Let $\br{S_1,\cdots,S_k}$ be a set of $k$ i.i.d sampled subsets each of size $\size$ as defined at Line~\ref{line:sapledsets} of Algorithm~\ref{alg:smallProb}, and let $\overline{s}_i$ be the mean of the $i$th subset $S_i$ as define at Line~\ref{line:si}. Let $\displaystyle{\hat{s}:=\argmin_{x\in \REAL^d}\sum_{i=1}^{k}\norm{\overline{s}_i - x}_2}$ be the geometric median of the set of means $\br{\overline{s}_1,\cdots,\overline{s}_{k}}$.

Using Corollary~$4.1.$ from~\cite{minsker2015geometric} we obtain that
$$\Pr \left(\norm{\hat{s} - \mu} \geq 11\sqrt{\frac {\var^2\log({1.4}/{\delta})}{ \myn } }\right )\leq \delta,$$
from the above we have that
\begin{align}
\Pr \left(\norm{\hat{s}- \mu}^2 \geq 121{\frac {\eps\var^2\log({1.4}/{\delta})}{ 4k } }\right )\leq \delta.\label{paperbound}
\end{align}
Note that
 \begin{align}
\Pr \left(\norm{\hat{s}- \mu}^2 \geq 121{\frac {\eps\var^2\log({1.4}/{\delta})}{ 4k } }\right ) &=
\Pr \left(\norm{\hat{s}- \mu}^2 \geq 30.25 \cdot \eps\var^2{\frac {\log({1.4}/{\delta})}{ \myk } }\right )\label{i1} \\
&\geq \Pr \left(\norm{\hat{s}- \mu}^2 \geq 31\cdot \eps\var^2 \right ), \label{use}
\end{align}
where~\eqref{i1} holds by substituting $k=\myk$ as in Line~\ref{line:defk} of Algorithm~\ref{alg:smallProb}, and~\eqref{use} holds since $\frac {\log({1.4}/{\delta})}{ \myk }<1$ for every $\delta \leq 0.9$ as we assumed.
Combining~\eqref{use} with~\eqref{paperbound} yields,
\begin{align}\label{medianisclose}
 \Pr \left(\norm{\hat{s}- \mu}^2 \geq 31 \cdot \eps \var^2\right ) \leq \delta.
\end{align}

For every $i\in [k]$, by substituting $S = S_i$, which is of size $\frac{4}{\eps}$, in Lemma~\ref{weak_coreset_markov}, we obtain that
\[
\Pr(\norm{\overline{s}_i -\mu}^2 \geq \eps \var^2) \leq 1/4.
\]
Hence, with probability at least $1- ({1}/{4})^{k}$ there is at least one set $S_j$ such that
$$\norm{\overline{s}_j -\mu}^2 \leq \eps\var^2.$$
By the following inequalities:
$$ ({1}/{4})^{k} =  ({1}/{4})^{\myk} \leq ({1}/{4})^{\log(1/\delta)} = 4^{\log(\delta)} \leq 2^{\log(\delta)}=\delta$$
we get that with probability at least $1-\delta$ there is a set $S_j$ such that
 \begin{align}\label{closetothemean}
\norm{\overline{s}_j -\mu}^2 \leq \eps\var^2.
 \end{align}
Combining~\eqref{closetothemean} with~\eqref{medianisclose} yields that with probability at least $(1-\delta)^2$ the set $S_j$ satisfies that
\begin{align}\label{medianandmean}
\norm{\overline{s}_j - \hat{s}}^2 \leq 32 \eps \var^2 . 
\end{align}
 
Let $f:\REAL^d\to [0,\infty)$ be a function such that $f(x)=\sum_{i=1}^{k}\norm{\overline{s}_i - x }_2$ for every $x\in \REAL^d$. Therefore, by the definitions of $f$ and $\hat{s}$,  $\displaystyle{\hat{s}:=\argmin_{x\in \REAL^d}\sum_{i=1}^{k}\norm{\overline{s}_i - x}_2=\argmin_{x\in \REAL^d}f(x)}$. Observe that $f$ is a convex function since it is a sum over convex functions.
By the convexity of $f$, we get that for every pair of points $p,q \in P$ it holds that:
\begin{align}\label{Imsmart}
    \text{if } f(q) \leq f(p) \text{ then } \norm{q-\hat{s}} \leq \norm{p-\hat{s}}.
\end{align}
Therefore, by the definition of $i^{*}$ at Line~\ref{closeindex} of Algorithm~\ref{alg:smallProb} we get that
\begin{align}\label{Imsmart2}
    {i^{*}} \in \argmin_{i\in [k]}\norm{\overline{s}_{i} - \hat{s}}.
\end{align}
Now by combining~\eqref{medianandmean} with~\eqref{Imsmart2} we have that:
  \begin{align}\label{medianandclosetoit}
\Pr\left(\norm{\overline{s}_{i^{*}} - \hat{s}}^2 \leq 32 \eps  \var^2  \right) \geq  (1-\delta)^2 .
 \end{align}
Combining~\eqref{medianandclosetoit} with~\eqref{medianisclose} and noticing the following inequality
 $$(1-\delta)^3 = (1-2\delta +\delta^{2})(1-\delta)  \geq  (1-2\delta )(1-\delta) =1 -\delta -2\delta + 2\delta^2 \geq 1-3\delta,$$
 satisfies Lemma~\ref{weak-probabilistic-coreset-theorem} as,
 \[
 \Pr\left( \norm{  \overline{s}_{i^{*}}  -\mu  }^2 \leq  33\eps  \var^2\right) \leq 1-3\delta.
 \]
 \noindent\textbf{Running time.}
 It takes $O\left( \frac{d\log{(\frac{1}{\delta})}}{\eps}  \right)$ to compute the set of means at Line~\ref{line:si}, and $O\left(d\log{(\frac{1}{\delta})}^2  \right)$ time to compute Line~\ref{closeindex} by simple exhaustive search over all the means. Hence, the total running time is $O\left(d\left (\log{(\frac{1}{\delta})}^2 + \frac{\log{(\frac{1}{\delta})}}{\eps} \right) \right)$.
\end{proof}

\clearpage
\bibliographystyle{alpha}
\bibliography{main}

\end{document}

The title should not exceed 20 words. Please be original and try to
include keywords, especially before a colon if applicable, as they will
increase the discoverability of your
article.~\href{https://authorservices.wiley.com/author-resources/Journal-Authors/Prepare/writing-for-seo.html}{Tips
on Search Engine Optimization}

\section*{Article Type}

{\label{925764}}

The \href{http://wires.wiley.com/go/forauthors\#ArticleTypes}{Article
Type} denotes the intended level of readership for your article. Please
select one of the below article type options. An Editor may have
mentioned a specific Article Type in your invitation letter; if so,
please let them know if you think a different Article Type better suits
your topic.~

\begin{itemize}
\tightlist
\item
  Opinion
\item
  Primer
\item
  Overview
\item
  Advanced Review
\item
  Focus Article
\item
  Software Focus
\end{itemize}

\section*{Authors}

{\label{290010}}

List each person's full name, ORCID iD, affiliation, email address, and
any conflicts of interest. Please use an asterisk (*) to indicate the
corresponding author.

The preferred (but optional) format for author names is First Name,
Middle Initial, Last Name.~~

The submitting author is required to provide
an~\href{https://authorservices.wiley.com/author-resources/Journal-Authors/Submission/orcid.html}{ORCID
iD}, and all other authors are encouraged to do so.~

Wiley requires that all authors disclose any potential conflicts of
interest. Any interest or relationship, financial or otherwise, that
might be perceived as influencing an author's objectivity is considered
a potential conflict of interest. The existence of a conflict of
interest does not preclude publication.

\section*{Abstract}

{\label{468816}}

The abstract should be a concise (less than 250 words) description of
the article and its implications. It should include all keywords
associated with your article, as keywords increase its discoverability
(\href{https://authorservices.wiley.com/author-resources/Journal-Authors/Prepare/writing-for-seo.html}{Tips
on Search Engine Optimization}). Please try not to include generic
phrases such as ``This article discusses \ldots{}'' or ``Here we
review,'' or references to other articles. Note: You will be required to
copy this abstract into the submission system when uploading your
article.

Optional: If you would like to submit your abstract in an additional
language,~\href{http://wires.wiley.com/go/forauthors\#Resources}{read
more}.

\section*{Graphical/Visual Abstract and
Caption}

{\label{750712}}

Include an attractive full color image to go under the text abstract and
in the online Table of Contents.~\textbf{You will also need to upload
this as a separate file during submission.~}It may be a figure or panel
from the article or may be specifically designed as a visual summary.
While original images are preferred, if you need to look for a
thematically appropriate stock image, you can go
to~\href{http://pixabay.com/}{pixabay.com}~(not affiliated with Wiley)
to find a free stock image with a CC0 license. Another option you have
is to utilize professional illustrators with
Wiley's~\href{https://wileyeditingservices.com/en/article-preparation/graphical-abstract-design}{Graphical
Abstract Design service}.

Size: The minimum resolution is 300 dpi. Please keep the image as simple
as possible because it will be displayed in multiple sizes. Multiple
panels and text other than labels are strongly discouraged.

Caption: This is a narrative sentence to convey the article's essence
and wider implications to a non-specialist audience. The maximum length
is 50 words, but consider using 280 characters or less to facilitate
social media sharing, which can increase the discoverability of your
article.

\par\null

\section*{1. Introduction}

{\label{252565}}

Introduce your topic in \textasciitilde{}2 paragraphs,
\textasciitilde{}750 words.

While Wiley does consider articles on preprint servers (ArXiv, bioRxiv,
psyArXiv, SocArXiv, engrXiv, etc.) for submission to primary research
journals, preprint articles should not be cited in WIREs manuscripts as
review articles should discuss and draw conclusions only from
peer-reviewed research. Remember that original research/unpublished work
should also not be included as it has not yet been peer-reviewed and
could put the work in jeopardy of getting published in the primary
press.

Citations are automatically generated by Authorea. Select~\textbf{cite}
to find and cite bibliographic resources. The citations will
automatically be generated for you in APA format, the style used by most
WIREs titles. If you are writing for~\emph{WIREs Computational Molecular
Science} (WCMS), you will need to use the Vancouver reference and
citation style, so before exporting click Export-\textgreater{} Options
and select a Vancouver export style.

A sample citation:~~\hyperref[csl:1]{(Murphy et al., 2019)}

\section*{2. First Level Heading}

{\label{350277}}

Begin the main body of the text here, using a maximum of 3 levels of
headings (style as shown below and numbered). Try to create headings
that:

\begin{itemize}
\tightlist
\item
  help the reader find information quickly;~
\item
  are descriptive yet specific;
\item
  are compatible in phrasing and style; and~
\item
  are concise (less than 50 characters)
\end{itemize}

\subsection*{2.1 Second level heading}

{\label{733281}}

Text

\subsubsection*{2.1.1 Third level heading}

{\label{824595}}

Text

\par\null

\section*{Figures and Tables}

{\label{432859}}

Figures and tables should be numbered separately, in the order in which
they appear in the manuscript. Please embed them in the correct places
in the text to facilitate peer review. Permission to \textbf{reuse or
adapt} previously published materials must be submitted before article
acceptance.
\href{http://wires.wiley.com/go/forauthors\#Resources}{Resources}

Production quality figure files with captions are still to be submitted
separately.
\href{https://authorservices.wiley.com/asset/photos/electronic_artwork_guidelines.pdf}{Figure
preparation and formatting}

\textbf{Captions should stand alone} and be informative outside of the
context of the article. This will help educators who may want to use a
PowerPoint slide of your figure. Explain any abbreviations or symbols
that appear in the figure and make sure to include \textbf{credit lines}
for any previously published materials.

\section*{Conclusion}

{\label{880788}}

Sum up the key conclusions of your review, highlighting the most
promising scientific developments, directions for future research,
applications, etc. The conclusion should be \textasciitilde{}2
paragraphs, \textasciitilde{}750 words total.

\section*{Funding Information}

{\label{974317}}

You will be required to enter your funding information into the
submission system so that we can apply proper IDs to your funders and
help you comply with any funder mandates.

\section*{Research Resources}

{\label{808103}}

List sources of non-monetary support such as supercomputing time at a
recognized facility, special collections or specimens, or access to
equipment or services.
\href{https://orcid.org/organizations/research-orgs/resources}{Research
Resources} can also be added to ORCiD profiles. For biomedical
researchers, the \href{https://scicrunch.org/resources}{Resource
Identification Portal} supports NIH's new guidelines for Rigor and
Transparency in biomedical publications.

\section*{Acknowledgments}

{\label{749861}}

List contributions from individuals who do not meet the criteria for
authorship (for example, to recognize people who provided technical
help, collation of data, writing assistance, acquisition of funding, or
a department chairperson who provided general support), \textbf{with
permission} from the individual. Thanks to anonymous reviewers are not
appropriate.

\section*{Notes}

{\label{390481}}

Authors writing from a humanities or social sciences perspective may use
notes if a \textbf{comment or additional information} is needed to
expand on a citation. (Notes only containing citations should be
converted to references. Conversely, any references containing comments,
such as ``For an excellent summary of\ldots{},'' should be converted to
notes.) Notes should be indicated by \textbf{superscript letters}, both
in the text and in the notes list. Citations within notes should be
included in the reference section, as indicated below.

\section*{Further Reading}

{\label{153582}}

For readers who may want more information on concepts in your article,
provide full references and/or links to additional recommended resources
(books, articles, websites, videos, datasets, etc.) that are not
included in the reference section. Please do not include links to
non-academic sites, such as Wikipedia, or to impermanent websites.

\section*{\texorpdfstring{{Note About
References}}{Note About References}}

{\label{514168}}

References are automatically generated by Authorea.
Select~\textbf{cite~}to find and cite bibliographic resources. The
bibliography will automatically be generated for you in APA format, the
style used by most WIREs titles. If you are writing for~\emph{WIREs
Computational Molecular Science}~ (WCMS), you will need to use
the~Vancouver reference style, so before exporting click
Export-\textgreater{} Options and select a Vancouver export style.~

\selectlanguage{english}
\FloatBarrier
\section*{References}\sloppy
\phantomsection
\label{csl:1}{Understanding institutions for water allocation and exchange: Insights from dynamic agent-based modeling}. (2019). \textit{Wiley Interdisciplinary Reviews: Water}, \textit{6}(6). \url{https://doi.org/10.1002/wat2.1384}

\end{document}